\definecolor{mydarkblue}{rgb}{0,0.08,0.45}
\newcommand{\re}{\mathrm{e}}
\newcommand{\Hs}{\mathcal{H}}
\newcommand{\Alg}{\mathcal{A}}
\newcommand{\dimh}{\dim_\mathrm{H}}
\newcommand{\dimml}{\underline{\dimm}}
\newcommand{\dimmu}{\overline{\dimm}}
\newcommand{\dimm}{\dim_\mathrm{M}}
\newcommand{\wb}{w}
\newcommand{\multix}{\mathbf{j}}
\newcommand{\rset}{\mathbb{R}}
\newcommand{\pr}{\mathbb{P}}
\newcommand{\E}{\mathbb{E}}
\newcommand{\rmd}{\mathrm{d}}
\newcommand{\Bm}{\mathrm{B}}
\newcommand{\Lm}{\mathrm{L}}
\newcommand{\Wm}{\mathrm{W}}
\newcommand{\sas}{{\cal S}\alpha{\cal S}}
\newcommand{\Bcal}{\mathcal{B}}
\newcommand{\Ncal}{\mathcal{N}}
\newcommand{\Rcal}{\mathcal{R}}
\newcommand{\Wcal}{\mathcal{W}}
\newcommand{\Xcal}{\mathcal{X}}
\newcommand{\Ycal}{\mathcal{Y}}
\newcommand{\Zcal}{\mathcal{Z}}
\newcommand{\veps}{\varepsilon}
\renewcommand{\symbol}{\Psi}
\newcommand{\dm}{d_{\mathrm{M}}}
\newcommand{\dH}{d_{\mathrm{H}}}
\newcommand{\ratefnc}{\rho}
\newcommand{\Mut}{M}
\newtheorem{thm}{Theorem}
\newtheorem{defn}{Definition}
\newtheorem{lemma}{Lemma}
\newtheorem{prop}{Proposition}
\newtheorem{example}{Example}
\newtheorem{assumption}{H}
\Crefname{assumption}{\textnormal{\textbf{H}}\hspace{-3pt}}{\textnormal{\textbf{H}}\hspace{-3pt}}
\crefname{assumption}{\textnormal{\textbf{H}}}{\textnormal{\textbf{H}}}
\newcommand\eqdist{\stackrel{\mathclap{\tiny\mbox{$\mathrm{d}$}}}{=}}
\title{Hausdorff Dimension, Heavy Tails, and Generalization in Neural Networks}
\author{%
  Umut \c{S}im\c{s}ekli$^{\text{1,2}}$, Ozan Sener$^{\text{3}}$, George Deligiannidis$^{\text{2,4}}$, Murat A. Erdogdu$^{\text{5,6}}$ \\
    LTCI, T\'{e}l\'{e}com Paris, Institut Polytechnique de Paris$^{\text{1}}$,
      University of Oxford$^{\text{2}}$, 
      Intel Labs$^{\text{3}}$\\ 
      The Alan Turing Institute$^{\text{4}}$,
      University of Toronto$^{\text{5}}$,
      Vector Institute$^{\text{6}}$ 
}
\begin{document}

\maketitle

\begin{abstract}
Despite its success in a wide range of applications, characterizing the generalization properties of stochastic gradient descent (SGD) in non-convex deep learning problems is still an important challenge. While modeling the trajectories of SGD via stochastic differential equations (SDE) under heavy-tailed gradient noise has recently shed light over several peculiar characteristics of SGD, a rigorous treatment of the generalization properties of such SDEs in a learning theoretical framework is still missing. Aiming to bridge this gap, in this paper, we prove generalization bounds for SGD under the assumption that its trajectories can be well-approximated by a \emph{Feller process}, which defines a rich class of Markov processes that include several recent SDE representations (both Brownian or heavy-tailed) as its special case. We show that the generalization error can be controlled by the \emph{Hausdorff dimension} of the trajectories, which is intimately linked to the tail behavior of the driving process. Our results imply that heavier-tailed processes should achieve better generalization; hence, the tail-index of the process can be used as a notion of ``capacity metric''. We support our theory with experiments on deep neural networks illustrating that the proposed capacity metric accurately estimates the generalization error, and it does not necessarily grow with the number of parameters unlike the existing capacity metrics in the literature.
\end{abstract}

\section{Introduction}
\vspace{-3pt}

Many important tasks in deep learning can be represented by the following optimization problem,
\begin{align}
\min_{\wb \in  \rset^d} \Bigl\{  f(\wb) := \frac1{n} \sum\nolimits_{i=1}^n f^{(i)}(\wb) \Bigr\}, \label{eqn:optim}
\end{align}
where $\wb\in  \rset^d$ denotes the network weights, $n$ denotes the number of training data points, $f$ denotes a non-convex cost function, and $f^{(i)}$ denotes the cost incurred by a single data point. 
Gradient-based optimization algorithms, perhaps Stochastic gradient descent (SGD) being the most popular one,  have been the primary algorithmic choice for attacking such optimization problems. Given an initial point $w_0$, the SGD algorithm is based on the following recursion,
\begin{align}\label{eqn:sgd}
\wb_{k+1} = \wb_k - \eta \nabla \tilde{f}_{k} (\wb_k) \ \ \text{ with } \ \ \nabla \tilde{f}_{k}(\wb) := \frac1{B} \sum\nolimits_{i \in \tilde B_k } \nabla f^{(i)}(\wb),
\end{align}
where $\eta$ is the step-size, and $\nabla \tilde{f}_{k}$ is the unbiased stochastic gradient
with batch size $B= |\tilde B_k|$ for a random subset $\tilde B_k$ of $\{1,\dots,n\}$ for all $k \in \mathbb{N}$, $|\cdot|$ denoting cardinality.
In contrast to convex optimization setting where the behavior of SGD is fairly well-understood (see e.g.~\cite{dieuleveut2017bridging,shalev2014understanding}),
the generalization properties of SGD
in non-convex deep learning problems is an active area of research~\cite{poggio2019theoretical,allen2019can,allen2019learning}.
In the last decade, there has been considerable progress around this topic,
where several generalization bounds have been proven in different mathematical setups \cite{neyshabur2015norm,mou2017generalization,london2017pac,dziugaite2017computing,kuzborskij2017data,raginsky2017non,zhou2019understanding,allen2019learning,negrea2019information}. While these bounds are useful at capturing the generalization behavior of SGD in certain cases, they typically grow with dimension $d$, which contradicts  empirical observations~\cite{neyshabur2017exploring}.

An important initial step towards developing a concrete generalization theory for the SGD algorithm in deep learning problems,
 is to characterize the statistical properties of the weights $\{\wb_k\}_{k\in \mathbb{N}}$,
as they might provide guidance for identifying the constituents that determine the performance of SGD.
A popular approach for analyzing the dynamics of SGD, mainly borrowed from statistical physics, is based on viewing it as a discretization of a \emph{continuous-time} stochastic process that can be described by a stochastic differential equation (SDE). For instance, if we assume that the gradient noise, i.e., $\nabla \tilde{f}_k(\wb) - \nabla f(\wb)$ can be well-approximated with a Gaussian random vector, we can represent \eqref{eqn:sgd} as the Euler-Maruyama discretization of the following SDE,
\begin{align}
\rmd \Wm_t = - \nabla f(\Wm_t) \rmd t +  \Sigma(\Wm_t) \rmd \Bm_t, \label{eqn:sde_brownian} %
\end{align}
where $\Bm_t$ denotes the standard Brownian motion in $\rset^d$, and $\Sigma : \rset^d \mapsto \rset^{d\times d}$ is called the diffusion coefficient. This approach has been adopted by several studies \cite{mandt2016variational,jastrzkebski2017three,hu2017diffusion,chaudhari2018stochastic,zhu2019anisotropic}. In particular, based on the `flat minima' argument (cf.\ \cite{hochreiter1997flat}), Jastrzebski et al. \cite{jastrzkebski2017three} illustrated that the performance of SGD on unseen data correlates well with the ratio
$\eta/B$.

More recently, Gaussian approximation for the gradient noise has been taken under investigation. While Gaussian noise can accurately characterize the behavior of SGD for very large batch sizes~\cite{panigrahi2019non}, 
Simsekli~et.~al.~\cite{pmlr-v97-simsekli19a} empirically demonstrated that the gradient noise in fully connected and convolutional neural networks can exhibit \emph{heavy-tailed} behavior in practical settings.
This characteristic was also observed in recurrent neural networks \cite{zhang2019adam}. Favaro et al. \cite{favaro2020stable} illustrated that \emph{the iterates themselves} can exhibit heavy-tails and investigated the corresponding asymptotic behavior in the infinite-width limit. Similarly, Martin and Mahoney \cite{martin2019traditional} observed that the eigenspectra of the weight matrices in individual layers of a neural network can exhibit heavy-tails; hence, they proposed a layer-wise heavy-tailed model for the SGD iterates.
By invoking results from heavy-tailed random matrix theory, they proposed a capacity metric based on a quantification of the heavy-tails, which correlated well with the performance of the network on unseen data. Further, they empirically demonstrated that this capacity metric does not necessarily grow with dimension $d$.

Based on the argument that the observed heavy-tailed behavior of SGD\footnote{Very recently, Gurbuzbalaban et al.\ \cite{gurbuzbalaban2020heavy} and Hodgkinson and Mahoney \cite{hodgkinson2020multiplicative} have simultaneously shown that the law of the SGD iterates \eqref{eqn:sgd} can indeed converge to a heavy-tailed stationary distribution with infinite variance when the step-size $\eta$ is large and/or the batch-size $B$ is small. These results form a theoretical basis for the origins of the observed heavy-tailed behavior of SGD in practice.} in practice cannot be accurately represented by an SDE driven by a Brownian motion,
Simsekli et al.\ \cite{pmlr-v97-simsekli19a} proposed modeling SGD with an SDE driven
by a heavy-tailed process, so-called the $\alpha$-stable L\'{e}vy motion \cite{sato1999levy}.
By using this framework and invoking metastability results proven in statistical physics
\cite{imkeller2006first,pavlyukevich2007cooling}, SGD is shown to spend more time 
around `wider minima', and the time spent around those minima is linked to the tail properties of the driving process \cite{pmlr-v97-simsekli19a,nguyen2019first}.

Even though the SDE representations of SGD have provided many insights on several distinguishing characteristics of this algorithm in deep learning problems, a rigorous treatment of their generalization properties in a statistical learning theoretical framework is still missing. In this paper, we aim to take a first step in this direction and prove novel generalization bounds in the case where the trajectories of the optimization algorithm (including but not limited to SGD) can be well-approximated by a \emph{Feller process} \cite{schilling2016introduction}, which form a broad class of Markov processes that includes many important stochastic processes as a special case. %
More precisely, as a proxy to SGD, we consider the Feller process that is expressed by the following SDE:
\begin{align}
\rmd \Wm_t = -\nabla f(\Wm_t) \rmd t +  \Sigma_1(\Wm_t) \rmd \Bm_t + \Sigma_2(\Wm_t) \rmd \Lm_t^{\boldsymbol{\alpha}(\Wm_t)}, \label{eqn:sde_general}
\end{align}
where $\Sigma_1, \Sigma_2$ are $d\times d$ matrix-valued functions, and $\Lm_t^{\boldsymbol{\alpha}(\cdot)}$ denotes the \emph{state-dependent} $\boldsymbol{\alpha}$-stable L\'{e}vy motion, which will be defined in detail in Section~\ref{sec:bg}.
Informally, $\Lm_t^{\boldsymbol{\alpha}(\cdot)}$ can be seen as a heavy-tailed generalization of the Brownian motion, where $\boldsymbol{\alpha}: \rset^d \mapsto (0,2]^d$ denotes its state-dependent \emph{tail-indices}. In the case $\alpha_i(w)=2$ for all $i$ and $w$, $\Lm_t^{\boldsymbol{\alpha}(\cdot)}$ reduces to $\sqrt{2}\Bm_t$ whereas
if ${\alpha}_i$ gets smaller than $2$, the process becomes heavier-tailed in the $i$-th component,
whose tails asymptotically obey a power-law decay with exponent $\alpha_i$.
The SDEs in \cite{mandt2016variational,jastrzkebski2017three,hu2017diffusion,chaudhari2018stochastic,zhu2019anisotropic} all appear as a special case of \eqref{eqn:sde_general} with $\Sigma_2 = 0$, and
the SDE proposed in \cite{pmlr-v97-simsekli19a} corresponds to the isotropic setting: $\Sigma_2(w)$ is diagonal and ${\alpha}_i(w) = \alpha \in (0,2]$ for all $i$, $w$.
In \eqref{eqn:sde_general}, we allow each coordinate of $\Lm_t^{\boldsymbol{\alpha}(\cdot)}$ to have a different tail-index
which can also depend on the state $\Wm_t$. We believe that $\Lm_t^{\boldsymbol{\alpha}(\cdot)}$ provides a more realistic model based on the empirical  results of \cite{csimcsekli2019heavy}, suggesting that the tail index can have different values at each coordinate and evolve over time.

At the core of our approach lies the fact that the sample paths of Markov processes often exhibit a \emph{fractal-like} structure \cite{xiao2003random}, and the generalization error over the sample paths is intimately related to
the `roughness' of the random fractal generated by the driving Markov process, as measured by a notion called the Hausdorff dimension.
Our main contributions are as follows. %
\begin{enumerate}[label=(\roman*),itemsep=0pt,topsep=0pt,leftmargin=*,align=left]
\item[$(i)$] 
We introduce a novel notion of complexity for the trajectories of a stochastic learning algorithm,
  which we coin as `uniform Hausdorff dimension'. Building on \cite{schilling1998feller}, we show that the sample paths of Feller processes admit a uniform Hausdorff dimension,
  which is closely related to the tail properties of the process. %
\item[$(ii)$] 
By using tools from geometric measure theory, we prove that
  the generalization error can be controlled by the Hausdorff dimension of the process, which can be significantly smaller than the standard Euclidean dimension. In this sense, the Hausdorff dimension acts as an `intrinsic dimension' of the problem, mimicking the role of Vapnik-Chervonenkis (VC) dimension in classical generalization bounds.
\end{enumerate}
These two contributions collectively show that heavier-tailed processes achieve smaller generalization error,
implying that the heavy-tails of SGD incur an implicit regularization. Our results also provide a theoretical justification to the observations reported in \cite{martin2019traditional} and \cite{pmlr-v97-simsekli19a}.
Besides, a remarkable feature of the Hausdorff dimension is that it solely depends on the tail behavior of the process; hence, contrary to existing capacity metrics, it does not necessarily grow with the number of parameters $d$.
Furthermore, we provide an efficient approach to estimate the Hausdorff dimension by making use of existing tail index estimators, and
empirically demonstrate the validity of our theory on various neural networks.
Experiments on both synthetic and real data verify that
our bounds do not grow with the problem dimension, providing an accurate characterization of the generalization performance.

\vspace{-10pt}
\section{Technical Background}
\vspace{-10pt}

\label{sec:bg}

\textbf{Stable distributions. }
Stable distributions appear as the limiting distribution in the generalized central limit theorem \cite{paul1937theorie} and can be seen as a generalization of the Gaussian distribution. In this paper, we will be interested in \emph{symmetric} $\alpha$-stable distributions, denoted by $\mathcal{S}\alpha\mathcal{S}$. In the one-dimensional case, a random variable $X$ is $\sas(\sigma)$ distributed, if its characteristic function (chf.) has the following form: $\mathbb{E} [\exp(i \omega X)] = \exp(-|\sigma \omega|^\alpha)$, where $\alpha \in (0,2]$ is called the \emph{tail-index} and $\sigma \in \rset_+$ is called the \emph{scale} parameter. When $\alpha = 2$, $\sas(\sigma) = \Ncal(0, 2\sigma^2)$, where $\Ncal$ denotes the Gaussian distribution in $\rset$. As soon as $\alpha < 2$, the distribution becomes heavy-tailed and $\mathbb{E}[|X|^q]$ becomes finite if and only if $q < \alpha$, indicating that the variance of $\sas$ is finite only when $\alpha = 2$.

There are multiple ways to extend $\sas$ to the multivariate case. In our experiments, we will be mainly interested in the \emph{elliptically-contoured} $\alpha$-stable distribution \cite{samorodnitsky1994stable}, whose
chf.\ is given by $\mathbb{E} [\exp(i \langle \omega, X \rangle)] = \exp(-\| \omega \|^{\alpha})$ for $X,\omega \in \rset^d$, where $\langle \cdot, \cdot \rangle$ denotes the Euclidean inner product. %
Another common choice is the multivariate $\boldsymbol{\alpha}$-stable distribution \emph{with independent components}
for a vector $\boldsymbol{\alpha}\in\mathbb{R}^d$, whose chf.\ is given by $\mathbb{E} [\exp(i \langle \omega, X \rangle)] = \exp(-\sum_{i=1}^d |\omega_i|^{\alpha_i})$. Essentially, the $i$-th component of $X$ is distributed with $\sas$  with parameters $\alpha_i$ and $\sigma_i =1$.
Both of these multivariate distributions reduce to a multivariate Gaussian when their tail indices are $2$.

\textbf{L\'{e}vy and Feller processes. }
We begin by defining a general L\'{e}vy process (also called L\'{e}vy motion), which includes Brownian motion $\Bm_t$ and the $\alpha$-stable motion $\Lm^\alpha_t$ as special cases\footnote{Here $\Lm^\alpha_t$ is equivalent to $\Lm_t^{\boldsymbol{\alpha}(\cdot)}$ with ${\alpha}_i(w) = \alpha \in (0,2]$, $\forall i \in \{1,\dots, d\}$ and $\forall w\in \rset^d$.}.
A L\'{e}vy process $\{\Lm_t\}_{t\geq 0}$ in $\rset^d$ with the initial point $\Lm_0 = 0$,
is defined by the following properties:
\begin{enumerate}[label=(\roman*),itemsep=0pt,topsep=0pt,leftmargin=*,align=left]
\item For $N\in \mathbb{N}$ and $t_0<t_1 < \cdots < t_N$, the increments $ (\Lm_{t_{i}} - \Lm_{t_{i-1}} )$ are independent for all $i$.%
\item For any $t>s>0$,  $(\Lm_t - \Lm_s)$ and $\Lm_{t-s}$ have the same distribution. %
\item $\Lm_t$ is continuous in probability, i.e., for all $\delta >0$ and $s\geq 0$, $\mathbb{P}(|\Lm_t - \Lm_s| > \delta) \rightarrow 0$ as $t \rightarrow s$.
\end{enumerate}

By the L\'{e}vy-Khintchine formula \cite{sato1999levy}, the chf.\ of a L\'{e}vy process is given by $\E[\exp(i \langle \xi, \Lm_t \rangle)] = \exp (-t \psi(\xi)) $, where $\psi : \rset^d \mapsto \mathbb{C}$ is called the characteristic (or L\'{e}vy) exponent, given as:
\begin{align}
\psi(\xi)=i\langle b, \xi\rangle+\frac{1}{2}\left\langle\xi, \Sigma \xi\right\rangle+\int_{\mathbb{R}^{d}}\left[1-e^{i \langle x, \xi\rangle}+\frac{i\langle x, \xi\rangle}{1+\|x\|^{2}}\right] \nu (\rmd x), \quad \forall \xi \in \mathbb{R}^{d}. \label{eqn:levy_exp}
\end{align}
Here, $b\in \rset^d$ denotes a constant drift, $\Sigma \in \rset^{d \times d}$ is a positive semi-definite matrix, and $\nu$ is called the L\'{e}vy measure, which is a Borel measure on $\rset^d \setminus \{0\}$ satisfying
$%
\smallint_{\rset^d} \|x\|^2/(1+\|x\|^2) \nu(\rmd x) < \infty.%
$ %
The choice of $(b,\Sigma,\nu)$ determines the law of $\Lm_{t-s}$; hence, it fully characterizes the process $\Lm_t$ by the Properties (i) and (ii) above. For instance, from \eqref{eqn:levy_exp}, we can easily verify that under the choice $b = 0$, $\Sigma = \frac1{2}\mathrm{I}_d$, and $\nu(\xi) = 0$, with $\mathrm{I}_d$ denoting the $d \times d$ identity matrix, the function $\exp(-\psi(\xi))$ becomes the chf.\ of a standard Gaussian in $\rset^d$; hence, $\Lm_t$ reduces to $\Bm_t$. On the other hand, if we choose $b=0$, $\Sigma = 0$, and
$\nu(\rmd x)=\frac{\rmd r}{r^{1+\alpha}} \lambda(\rmd y)$, for all $ x=r y,(r, y) \in \mathbb{R}_{+} \times \mathbb{S}^{d-1}$
where $\mathbb{S}^{d-1}$ denotes unit sphere in $\rset^d$ and $\lambda$ is an arbitrary Borel measure on $\mathbb{S}^{d-1}$, we obtain the chf.\ of a generic multivariate $\alpha$-stable distribution, hence $\Lm_t$ reduces to $\Lm_t^\alpha$. Depending on $\lambda$, $\exp(-\psi(\xi))$ becomes the chf.\ of an elliptically contoured $\alpha$-stable distribution or an $\alpha$-stable distribution with independent components \cite{xiao2003random}.

Feller processes (also called \emph{L\'{e}vy-type processes} \cite{bottcher2013levy}) are a general family of Markov processes, which further extend the scope of L\'{e}vy processes. In this study, we consider a class of Feller processes \cite{courrege1965forme}, which locally behave like L\'{e}vy processes and they additionally allow for \emph{state-dependent} drifts $b(w)$, diffusion matrices $\Sigma(w)$, and L\'{e}vy measures $\nu(w, \rmd y)$ for $w\in \rset^d$.
For a fixed state $w$, a Feller process $\{\Wm_t\}_{t \geq 0}$ is defined through the chf.\ of the random variable $\Wm_t -w$, given as $\psi_{t}(w, \xi)=\mathbb{E}\left[\exp(-i\langle\xi, \Wm_t-w\rangle)\right]$.
A crucial characteristic of a Feller process related to its chf.\ is its \emph{symbol} $\symbol$, defined as,
\begin{align}
\symbol(w, \xi)=i\langle b(w), \xi\rangle+\frac{1}{2}\left\langle\xi, \Sigma(w) \xi \right\rangle+\int_{\mathbb{R}^{d}}\left[1-e^{i\langle x, \xi\rangle}+\frac{i\langle w, \xi\rangle}{1+\|x\|^{2}}\right] \nu(w, \rmd x) \label{eqn:feller_symbol},
\end{align}
for $w,\xi \in \mathbb{R}^{d}$~\cite{schilling1998feller,xiao2003random,jacob2002pseudo}. Here, for each $w \in \rset^d$, $\Sigma(w) \in \rset^{d\times d}$ is symmetric positive semi-definite, and for all $w$, $\nu(w, \rmd x)$ is a L\'{e}vy measure.%

Under mild conditions, one can verify that the SDE~\eqref{eqn:sde_general} we use as a proxy for the SGD algorithm
indeed corresponds to a Feller process with $b(w) = - \nabla f(w)$, $\Sigma(w) = 2\Sigma_1(w)$, and an appropriate choice of $\nu$ (see \cite{huang2018homogenization}). We also note that many other popular stochastic optimization algorithms can be accurately represented by a Feller process, which we describe in the supplementary document. Hence, our results can be useful in a broader context.

\textbf{Decomposable Feller processes.} In this paper, we will focus on decomposable Feller processes introduced in \cite{schilling1998feller}, which will be useful in both our theory and experiments.
Let $\Wm_t$ be a Feller process with symbol $\symbol$. We call the process $\Wm_t$ \emph{`decomposable at $w_0$'}, if there exists a point $w_0 \in \rset^d$, such that $\symbol(w, \xi)= \psi(\xi)+ \tilde{\symbol}(w, \xi) $, where $\psi(\xi) := \symbol(w_{0}, \xi)$ is called the \emph{sub-symbol} and $\tilde{\symbol}(w, \xi) := \symbol(w, \xi)-\symbol(w_{0}, \xi)$ is the remainder term. Here, $\tilde{\symbol}$ is assumed to satisfy certain smoothness and boundedness assumptions, which are provided in the supplementary document.
Essentially, %
the technical regularity conditions on $\tilde{\symbol}$ impose a structure on the triplet $(b,\Sigma,\nu)$ around $w_0$ which ensures that, around that point, $\Wm_t$ behaves like a L\'{e}vy process whose characteristic exponent is given by the sub-symbol $\psi$. 

\textbf{The Hausdorff Dimension. }
Due to their recursive nature, Markov processes often generate `random fractals' \cite{xiao2003random} and understanding the structure of such fractals has been a major challenge in modern probability theory \cite{bishop2017fractals,khoshnevisan2009fractals,khoshnevisan2017macroscopic,yang2018multifractality,le2019hausdorff,lHorinczi2019multifractal}. In this paper, we are interested in identifying the complexity of the fractals generated by a Feller process that approximates SGD. %

The \emph{intrinsic complexity} of a fractal is typically characterized by a notion called the Hausdorff dimension \cite{falconer2004fractal}, which extends the usual notion of dimension (e.g., a line segment is one-dimensional, a plane is two-dimensional) to fractional orders. Informally, this notion measures the `roughness' of an object (i.e., a set) and in the context of L\'{e}vy processes, they are deeply connected to the tail properties of the corresponding L\'{e}vy measure. \cite{schilling1998feller,xiao2003random,yang2018multifractality}. %
Before defining the Hausdorff dimension, we need to introduce the Hausdorff measure. %
 Let $G \subset \rset^d$ and $\delta >0$, and consider all the $\delta$-coverings $\{A_i\}_i$ of $G$, i.e., each $A_i$ denotes a set with diameter less than $\delta$ satisfying $G \subset \cup_i A_i$. For any $s \in (0,\infty)$, we then denote:
%
$\Hs^s_\delta(G) := \inf \sum\nolimits_{i=1}^\infty \mathrm{diam}(A_i)^s$,
%
where the infimum is taken over all the $\delta$-coverings. %
The $s$-dimensional Hausdorff measure of $G$ is defined as the monotonic limit:
$\Hs^s(G) := \lim_{\delta \to 0} \Hs^s_\delta(G)$.
%
It can be shown that $\Hs^s$ is an outer measure; hence, it can be extended to a complete measure by the Carath\'{e}odory extension theorem \cite{mattila1999geometry}. When $s$ is an integer, $\Hs^s$ is equal to the $s$-dimensional Lebesgue measure up to a constant factor; thus, it strictly generalizes the notion `volume' to the fractional orders. We now proceed with the definition of the Hausdorff dimension.
\begin{defn}%
  \label{def:hausdorff}
The Hausdorff dimension of $G\subset \rset^d$ is defined as follows.
\begin{align}
\dimh G:= \sup\{s >0 : \Hs^s(G)>0 \} = \inf \{s>0 : \Hs^s(G) < \infty \}.
\end{align}
\end{defn}
One can show that if $\dimh G = s$, then $\Hs^{r}(G) = 0$ for all $r >s$ and $\Hs^r(G) = \infty$ for all $r<s$ \cite{edgar1990undergraduate}. In this sense, the Hausdorff dimension of $G$ is the moment order $s$ when $\Hs^s(G)$ drops from $\infty$ to $0$, and we always have $0 \leq \dimh G \leq d$ \cite{falconer2004fractal}. Apart from the trivial cases such as $\dimh \rset^d= d$, a canonical example is the well-known Cantor set, whose Hausdorff dimension is $(\log 2 / \log 3) \in (0,1)$. Besides, the Hausdorff dimension of Riemannian manifolds correspond to their intrinsic dimension, e.g., $\dimh \mathbb{S}^{d-1} = d-1$.

We note that, starting with the seminal work of Assouad \cite{assouad1983densite}, %
tools from fractal geometry have been considered in learning theory \cite{sugiyama2013learning,malach2019deeper,dym2019expression} in different contexts. In this paper, we consider the Hausdorff dimension of the sample paths of Markov processes in a learning theoretical framework, which, to the best of our knowledge, has not yet been investigated in the literature.
\vspace{-5pt}
\section{Uniform Hausdorff Dimension and Generalization}
\vspace{-5pt}

\textbf{Mathematical setup. }
Let $\Zcal =  \Xcal \times \Ycal$ denote the space of data points, with $\Xcal$ being the space of features and $\Ycal$ the space of the labels. We consider an unknown data distribution over $\Zcal$, denoted by $\mu_{z}$. We assume that we have access to a training set with $n$ elements, denoted as $S = \{z_1, \dots, z_n \}$, where each element of $S$ is independently and identically distributed (i.i.d.) from $\mu_{z}$. We will denote $S \sim \mu_{z}^{\otimes n}$, where $\mu_{z}^{\otimes n}$ is the $n$-times product measure of $\mu_z$. 
    
To assess the quality of an estimated parameter, we consider a loss function $\ell : \rset^d \times \Zcal \mapsto \rset_+$, such that $\ell(\wb, z)$ measures the loss induced by a single data point $z$ for the particular choice of parameter $\wb \in \rset^d$. We accordingly denote the population risk with 
$\Rcal(\wb) := \E_z [\ell(\wb,z)] $
and the empirical risk with 
$\hat{\Rcal}(\wb,S) := \frac1{n} \sum\nolimits_{i=1}^n \ell(\wb, z_i)$.
We note that we allow the cost function $f$ in \eqref{eqn:optim} and the loss $\ell$ to be different from each other, where $f$ should be seen as a surrogate loss function. In particular, we will have different sets of assumptions on $f$ and $\ell$. However, as $f$ and $\ell$ are different from each other, the discrepancy between the risks of their respective minimizers would have an impact on generalization. We leave the analysis of such discrepancy as a future work.

An \emph{iterative training algorithm} $\Alg$ (for example SGD) is a function of two variables $S$ and $U$, where
$S$ denoting the dataset and $U$ encapsulating all the \emph{algorithmic randomness} (e.g. batch indices to be used in training).
The algorithm $\Alg(S, U)$ returns the entire evolution of the parameters in the time frame $[0,T]$, where $[\Alg(S, U)]_t =w_t$ being the parameter value returned by $\Alg$ at time $t$ 
(e.g. parameters trained by SGD at time $t$).
More precisely, given a training set $S$ and a random variable $U$,
the algorithm will output a random process $\{w_t\}_{t\in[0,T]}$
indexed by time, which is the \emph{trajectory of iterates}.
To formalize this definition, let us denote
the class of bounded Borel functions defined from $[0,T]$ to $\rset^d$
with $\Bcal([0,T], \rset^d)$, and define
$\Alg : \bigcup_{n=1}^\infty \Zcal^n\times \Omega \mapsto \Bcal([0,T], \rset^d)$, where $\Omega$ denotes the domain of $U$. We will denote the law of $U$ by $\mu_u$,
and without loss of generality we let $T=1$.

In the remainder of the paper, we will consider the case where the algorithm $\Alg$ is chosen to be the trajectories produced by a Feller process $\Wm^{(S)}$ (e.g. the proxy for SGD \eqref{eqn:sde_general}), whose symbol depends on the training set $S$. More precisely, given $S \in \Zcal^n$, the output of the training algorithm $\Alg(S, \cdot)$ will be the random mapping $ t \mapsto \Wm^{(S)}_t$, where the symbol of $\Wm^{(S)}$ is determined by the drift $b_S(w)$, diffusion matrix $\Sigma_S (w)$, and the L\'{e}vy measure $\nu_S(w,\cdot)$ (see \eqref{eqn:feller_symbol} for definitions), which all depend on $S$.
In this context, the random variable $U$ represents the randomness that is incurred by the Feller process. In particular, for the SDE proxy~\eqref{eqn:sde_general}, $U$ accounts for the randomness due to $\Bm_t$ and $\Lm_t^{\boldsymbol{\alpha}(\cdot)}$.

As our framework requires $\Alg$ to produce continuous-time trajectories to represent the discrete-time recursion of SGD \eqref{eqn:sgd}, we can consider the \emph{linearly interpolated} continuous-time process; an approach which is commonly used in SDE analysis \cite{dalalyan2017theoretical,raginsky2017non,erdogdu2018global,nguyen2019first,erdogdu2020convergence}. For a given $t \in [k\eta, (k+1)\eta)$, we can define the process $\tilde{\Wm}_t$ as the linear interpolation of $w_k$ and $w_{k+1}$ (see \eqref{eqn:sgd}), such that $w_k = \tilde{\Wm}_{k \eta}$ for all $k$.
On the other hand, the random variable $U$ here represents the randomness incurred by the choice of the random minibatches $\tilde B_k$ over iterations \eqref{eqn:sgd}.

Throughout this paper, we will assume that $S$ and $U$ are independent from each other. In the case of \eqref{eqn:sde_general}, this will entail that the randomness in $\Bm_t$ and $\Lm_t^\alpha$ does not depend on $S$, or in the case of the SGD recursion, it will require the random sets $\tilde B_k \subset \{1,\dots,n\}$ to be drawn independently from $S$\footnote{Note that this prevents adaptive minibatching algorithms e.g., \cite{au1999new} to be represented in our framework.}. Under this assumption, $U$ does not play a crucial role in our analysis; hence, to ease the notation, we will occasionally omit the dependence on $U$ and simply write $\Alg(S) := \Alg(S, U)$. We will further use the notation $[\Alg(S)]_t := [\Alg(S,U)]_t$ to refer to $w_t$.
Without loss of generality, we will assume that the training algorithm is always initialized with zeros, i.e., $[\Alg(S)]_0 = \mathbf{0} \in \rset^d$, for all $S\in \Zcal^n$.
Finally, we define the collection of the parameters given in a trajectory, as the image of $\Alg(S)$, i.e., 
$\Wcal_S := \{\wb \in \rset^d : \exists t \in [0,1],  \wb = [\Alg(S)]_t \}$
and the collection of all possible parameters as the union 
$\Wcal := \bigcup_{n\geq 1}\bigcup_{S\in \Zcal^n} \Wcal_S$.
Note that $\Wcal$ is still random due to its dependence on $U$.
\textbf{Uniform Hausdorff dimension and Feller processes. }
In this part, we introduce the `uniform Hausdorff dimension' property for a training algorithm $\Alg$, which is a notion of complexity based on the Hausdorff dimension of the trajectories generated by $\Alg$. By translating \cite{schilling1998feller} into our context, we will then show that decomposable Feller processes possess this property.%

\begin{defn}%
\label{def:unif_dimh}
An algorithm $\Alg$ has uniform Hausdorff dimension $\dH$ if for any $n \in \mathbb{N}_+$ and any training set $S \in \Zcal^n$ 
\begin{align}
\dimh \Wcal_S = \dimh  \{ \wb \in \rset^d : \exists t \in [0,1], \wb = [\Alg (S,U)]_t  \}
 \leq& \dH, \quad \text{$\mu_u$-almost surely.} \label{eqn:uni_dim}
\end{align}
\end{defn}
Since $\Wcal_S \subset \Wcal \subset \rset^d$, by the definition of Hausdorff dimension, any algorithm $\Alg$ possesses the uniform Hausdorff dimension property trivially with $\dH = d$. However, as we will illustrate in the sequel, $\dH$ can be much smaller than $d$, which is of our interest in this study.

\begin{figure}[t]
\centering
\includegraphics[width=0.25\textwidth]{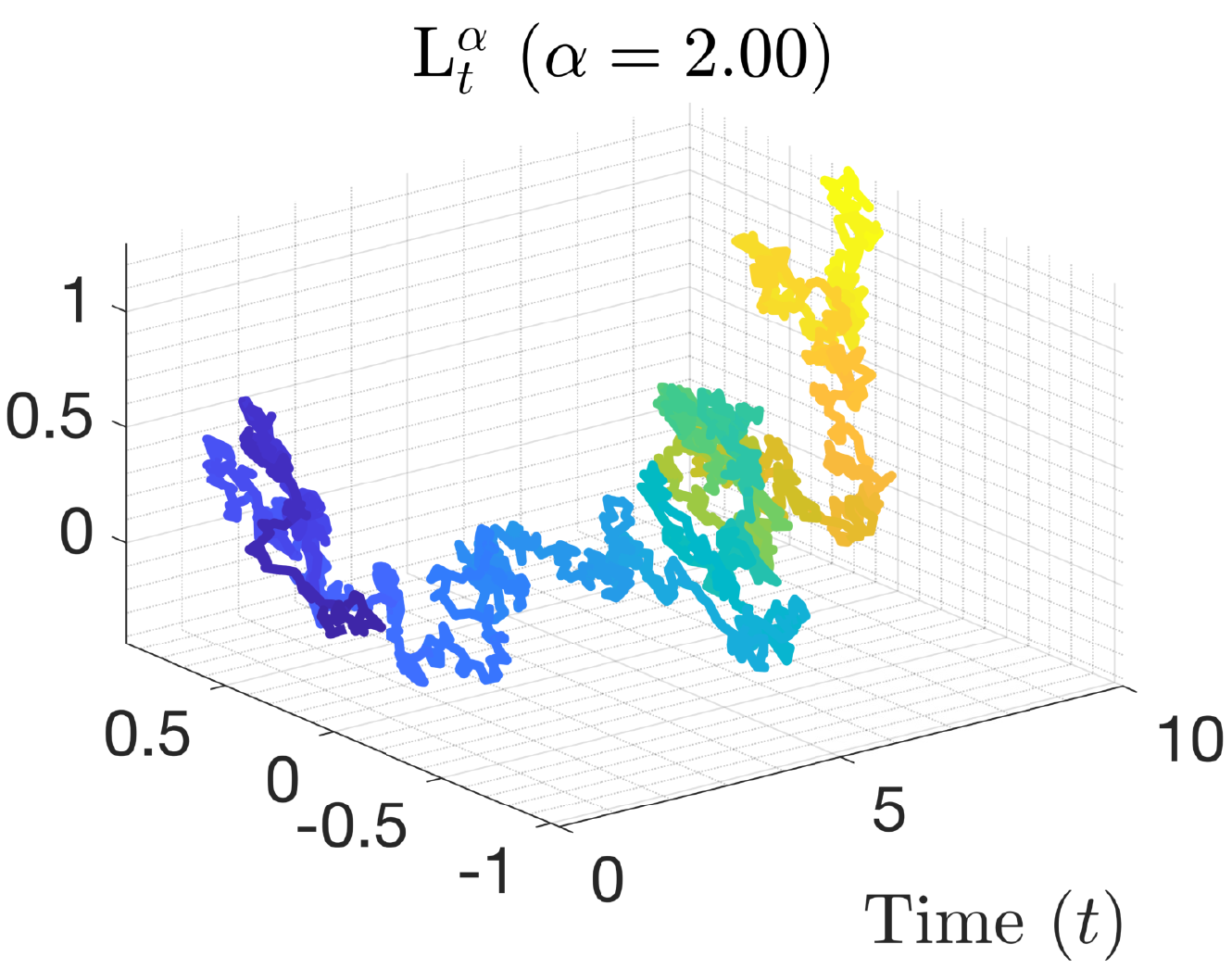}
\includegraphics[width=0.25\textwidth]{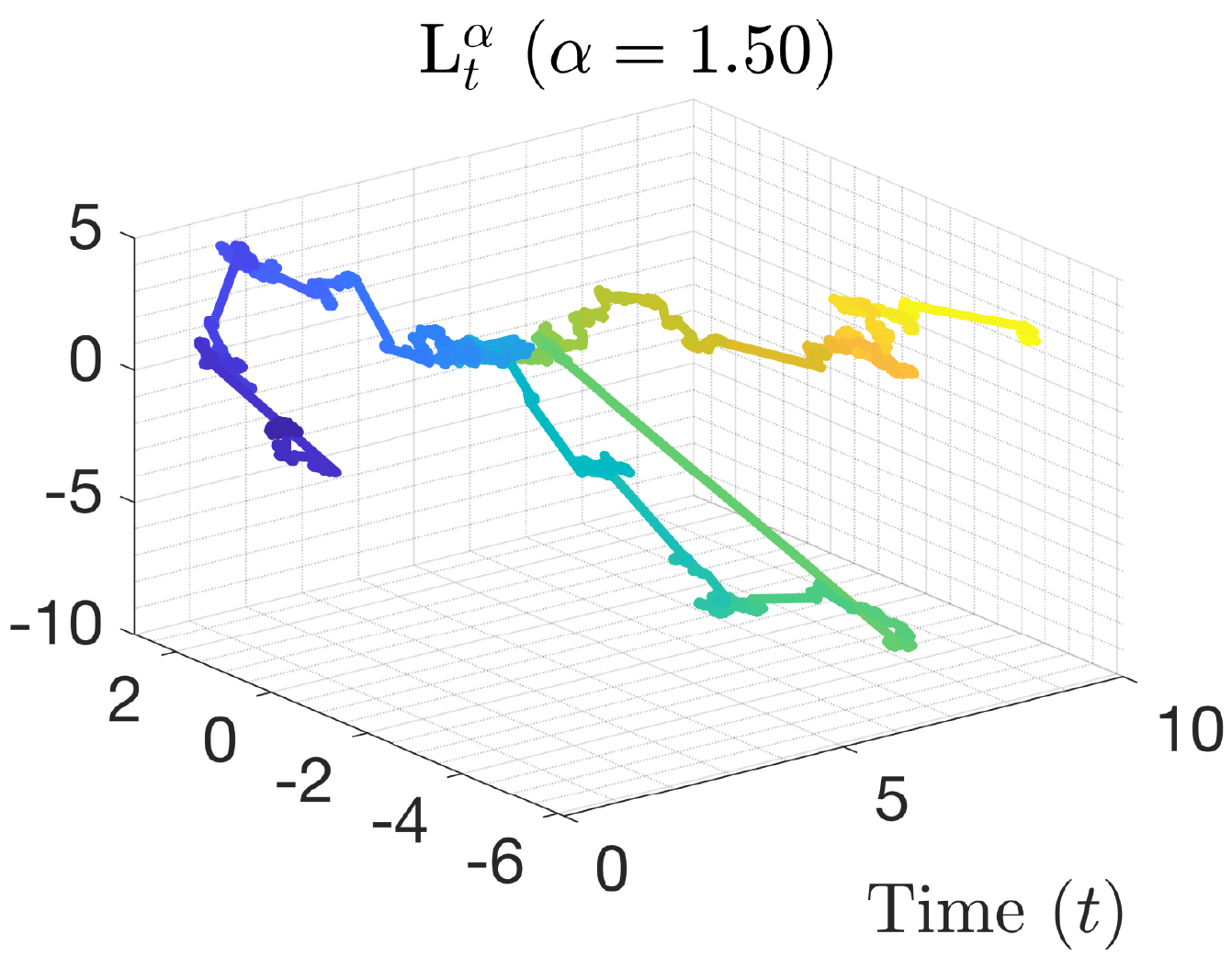}
\includegraphics[width=0.25\textwidth]{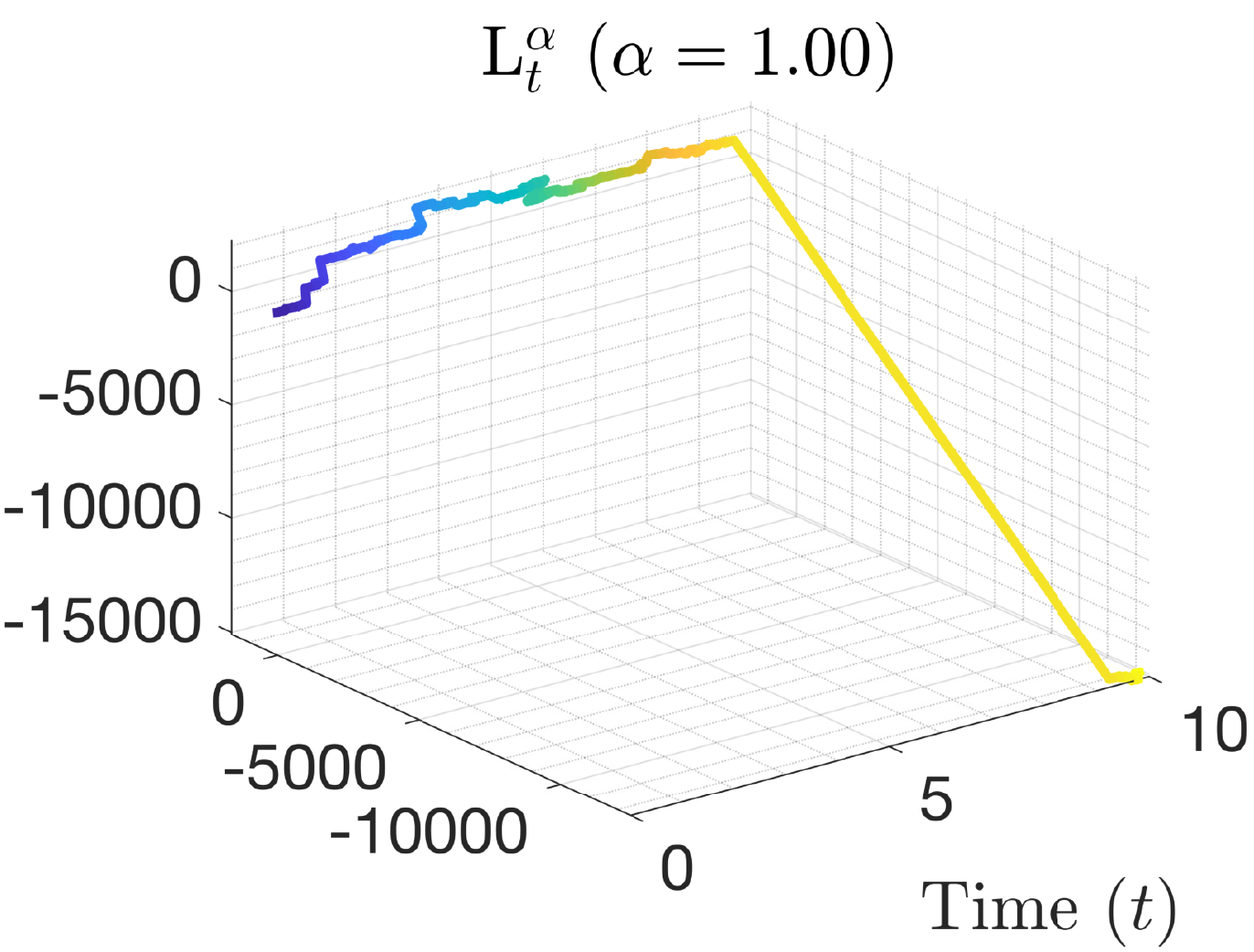}
\vspace{-2mm}
\caption{Trajectories of $\Lm^{\alpha}_t$ for $\alpha=2.0$, $1.5$, and $1.0$. The colors indicate the evolution in time. We observe that the trajectories become `simpler' as $\dimh \Lm^{\alpha}[0,T] = \alpha$ gets smaller.}
\label{fig:example}
\vspace{-3mm}
\end{figure}

\begin{prop}
\label{prop:feller_dim}
Let $\{\Wm^{(S)}\}_{S\in \mathcal{Z}^n}$ be a family of Feller processes. Assume that for each $S$, $\Wm^{(S)}$ is decomposable at a point $w_S$ with sub-symbol $\psi_S$.
Consider the algorithm $\Alg$ that returns $[\Alg(S)]_t = \Wm^{(S)}_t$ for a given $S\in \Zcal^n$ and for every $t\in[0,1]$. Then, we have
\begin{align}
\dimh \Wcal_S \leq \beta_S,\  \ \text{ where }\ \
\beta_S := \inf \Bigl\{\lambda \geq 0: \lim \nolimits_{\|\xi\| \rightarrow \infty} \tfrac{|\psi_S(\xi)|}{\|\xi\|^{\lambda}}=0\Bigr\},
\end{align}
$\mu_u$-almost surely.
Furthermore, $\Alg$ has uniform Hausdorff dimension with $\dH = \sup_n \sup\nolimits_{S\in \Zcal^n} \beta_S$.
\end{prop}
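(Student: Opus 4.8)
The plan is to reduce the first bound to the sample-path dimension theorem for decomposable Feller processes proved in \cite{schilling1998feller}, and then to read off the uniform statement directly from Definition~\ref{def:unif_dimh}. Fix $n\in\mathbb{N}_+$ and $S\in\Zcal^n$, and write $\symbol$ for the symbol of $\Wm^{(S)}$, so that the sub-symbol is $\psi_S(\xi)=\symbol(w_S,\xi)$ and the remainder is $\tilde\symbol(w,\xi)=\symbol(w,\xi)-\psi_S(\xi)$. By assumption $\Wm^{(S)}$ is decomposable at $w_S$, and the boundedness and smoothness conditions on $\tilde\symbol$ recorded in the supplementary material are exactly those under which \cite{schilling1998feller} controls the roughness of the trajectories. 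Since the range $\Wcal_S=\{\wb\in\rset^d:\exists t\in[0,1],\ \wb=\Wm^{(S)}_t\}$ is precisely the sample-path set treated there, that reference gives
\begin{align}
\dimh\Wcal_S\ \leq\ \inf\Bigl\{\lambda\geq 0:\ \lim_{\|\xi\|\to\infty}\tfrac{|\psi_S(\xi)|}{\|\xi\|^{\lambda}}=0\Bigr\}\ =\ \beta_S\qquad\mu_u\text{-almost surely},
\end{align}
which is the first claim. As a consistency check, for $\psi_S(\xi)=\|\xi\|^{\alpha}$ this returns $\beta_S=\alpha$, recovering $\dimh\Lm^{\alpha}[0,1]=\alpha$ from Figure~\ref{fig:example}.

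It is worth recalling the mechanism behind this bound, since it explains why $\beta_S$ is the relevant quantity. For any $\lambda>\beta_S$, partition $[0,1]$ into the $2^m$ dyadic subintervals $I_{m,k}$ of length $2^{-m}$; then $\Wm^{(S)}(I_{m,k})$ lies in a ball whose radius is the oscillation of $\Wm^{(S)}$ over $I_{m,k}$, and a Pruitt--Schilling-type maximal inequality bounds, for any $r>0$, the probability that this oscillation exceeds $r$ by a quantity of order $2^{-m}\sup_{\|\xi\|\leq 1/r}|\symbol(\cdot,\xi)|$. Decomposability lets one replace $\symbol$ by the L\'{e}vy exponent $\psi_S$ up to lower-order terms that are uniformly controlled by the conditions on $\tilde\symbol$, so the choice $\lambda>\beta_S$ makes $\E\bigl[\sum_k\mathrm{diam}(\Wm^{(S)}(I_{m,k}))^{\lambda}\bigr]$ bounded in $m$; hence $\Hs^{\lambda}(\Wcal_S)<\infty$ almost surely and $\dimh\Wcal_S\leq\lambda$, and letting $\lambda\downarrow\beta_S$ finishes the argument. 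The trivial bound $\dimh\Wcal_S\leq d$ also always holds since $\Wcal_S\subset\rset^d$.

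For the ``furthermore'' part, set $\dH:=\sup_n\sup_{S\in\Zcal^n}\beta_S$. The previous step shows that for every $n\in\mathbb{N}_+$ and every $S\in\Zcal^n$ we have $\dimh\Wcal_S\leq\beta_S\leq\dH$ with $\mu_u$-probability one, which is exactly the condition in Definition~\ref{def:unif_dimh}. Crucially, the almost-sure statement is needed only separately for each fixed $S$, so no union of $\mu_u$-null sets over the (possibly uncountable) index set $\bigcup_n\Zcal^n$ is formed, and no separability or measurability assumption on $\Zcal$ enters.

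The main obstacle is the first step: one has to check that the decomposability hypothesis together with the technical regularity imposed on $\tilde\symbol$ match (or imply) the precise hypotheses of \cite{schilling1998feller} — in particular the uniform-in-$w$ growth control of the symbol needed for the maximal inequality, and the fact that the remainder term cannot inflate the generalized Blumenthal--Getoor index — and that our $\Wcal_S$, the image of $[0,1]$ under the Feller proxy initialised at $\mathbf{0}$ rather than at the decomposition point $w_S$, is indeed the range object for which the theorem there yields the bound. Everything else is bookkeeping.
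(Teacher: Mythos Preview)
Your proposal is correct and follows essentially the same approach as the paper: both proofs obtain the bound $\dimh\Wcal_S\leq\beta_S$ by directly invoking Schilling's sample-path dimension theorem (Theorem~4 in \cite{schilling1998feller}, restated as Theorem~\ref{thm:schilling} in the supplement) for each fixed $S$, and then read off the uniform Hausdorff dimension from the definition. The paper's proof is in fact a single sentence to this effect; your additional paragraph sketching the dyadic/maximal-inequality mechanism and your caveat about matching hypotheses are useful exposition but not a different argument.
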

We provide all the proofs in the supplementary document. 
Informally, this result can be interpreted as follows. Thanks to the decomposability property, for each $S$, the process $\Wm^{(S)}$ behaves like a L\'{e}vy motion around $w_S$, and the characteristic exponent is given by the sub-symbol $\psi_S$. Because of this locally regular behavior, the Hausdorff dimension of the image of $\Wm^{(S)}$ can be bounded by $\beta_S$, which only depends on \emph{tail behavior} of the L\'{e}vy process whose exponent is the sub-symbol $\psi_S$.
\begin{example}
In order to illustrate Proposition~\ref{prop:feller_dim}, let us consider a simple example, where $\Wm^{(S)}_t$ is taken as the $d$-dimensional $\alpha$-stable process with $d
\geq 2$, which is independent of the data sample $S$. More precisely, $\Wm_t^{(S)}$ is the solution to the SDE given by $\rmd \Wm^{(S)}_t = \rmd \Lm_t^\alpha$ for some $\alpha \in (0,2]$, where $\Lm^{\alpha}_1$ is an elliptically-contoured $\alpha$-stable random vector. As $\Wm^{(S)}_t$ is already a L\'{e}vy process, it trivially satisfies the assumptions of Proposition~\ref{prop:feller_dim} with $\beta_S = \alpha$ for all $n$ and $S$ \cite{blumenthal1960some}, hence $\dimh \Wcal_S \leq \alpha$, $\mu_u$-almost surely (in fact, one can show that $\dimh \Wcal_S = \alpha$, see \cite{blumenthal1960some}, Theorem 4.2). Hence, the `algorithm' $[\Alg(S)]_t = \Wm^{(S)}_t$ has uniform Hausdorff dimension $\dH = \alpha$. This shows that as the process becomes heavier-tailed (i.e., $\alpha$ decreases), the Hausdorff dimension $\dH$ gets smaller. This behavior is illustrated in Figure~\ref{fig:example}. 
\end{example}

The term $\beta_S$ is often termed as the upper Blumenthal-Getoor (BG) index of the L\'{e}vy process with an exponent $\psi_S$ \cite{blumenthal1960some}, and it is directly related to the \emph{tail-behavior} of the corresponding L\'{e}vy measure. In general, the value of $\beta_S$ decreases as the process gets heavier-tailed, which implies that the heavier-tailed processes have smaller Hausdorff dimension; thus, they have smaller complexity.

\textbf{Generalization bounds via Hausdorff dimension. } 
This part provides the main contribution of this paper, where we show that the generalization error of a training algorithm can be controlled by the Hausdorff dimension of its trajectories. Even though our interest is still in the case where $\Alg$ is chosen as a Feller process, the results in this section apply to more general algorithms. To this end, we will be mainly interested in bounding the following object:
\begin{align}
\sup\nolimits_{t\in [0,1]} |\hat{\Rcal}([\Alg(S)]_t,S) - \Rcal([\Alg(S)]_t)|  = \sup\nolimits_{\wb\in \mathcal{W}_S}  |\hat{\mathcal{R}}(\wb,S) -\mathcal{R}(\wb)|,
\label{eq:sup_pop_minus_empirical}
\end{align}
with high probability over the choice of $S$ and $U$. Note that this is an algorithm dependent definition of generalization that is widely used in the literature (see \cite{bousquet2002stability} for a detailed discussion).

To derive our first result, we will require the following assumptions. %
\begin{assumption}
\label{asmp:bounded}
$\ell$ is bounded by $B$ and $L$-Lipschitz continuous in $\wb$. 
\end{assumption}
\begin{assumption}
\label{asmp:minkowski}
The diameter of $\Wcal$ is finite $\mu_u$-almost surely. $S$ and $U$ are independent.
\end{assumption}
\begin{assumption}
\label{asmp:uni_dh}
$\Alg$ has uniform Hausdorff dimension $\dH$.
\end{assumption}
\begin{assumption} 
\label{asmp:dimequiv}
For $\mu_u$-almost every $\Wcal$, there exists a Borel measure $\mu$ on $\mathbb{R}^{d}$ and positive numbers $a, b, r_{0}$ and $s$
such that $0<\mu(\Wcal) \leq \mu\left(\mathbb{R}^{d}\right)<\infty$ and
$ 0<a r^{s} \leq \mu(B_d(x, r)) \leq b r^{s}<\infty$ for  $x \in \Wcal, 0<r \leq r_{0}$.
\end{assumption}
Boundedness of the loss can be relaxed at the expense of using sub-Gaussian concentration bounds and introducing more complexity into the expressions \cite{mei2016landscape}. More precisely, \Cref{asmp:bounded} can be replaced with the assumption $\exists K >0$, such that $\forall p$, $\mathbb{E}[\ell(w,z)^p]^{1/p} \leq K \sqrt{p}$, and by using sub-Gaussian concentration our bounds will still hold with $K$ in place of $B$.
On the other hand, since we have a finite time-horizon and we fix the initial point of the processes to $\mathbf{0}$, by using \cite{xie2020ergodicity} Lemma 7.1, we can show that the finite diameter condition on $\Wcal$ holds almost surely, if standard regularity assumptions hold uniformly on the coefficients of $\Wm^{(S)}$ (i.e., $b$, $\Sigma$, and $\nu$ in \eqref{eqn:feller_symbol}) for all $S \in \Zcal^n$, and a countability condition on $\Zcal$.
Finally \Cref{asmp:dimequiv} is a common condition in fractal geometry, and ensures that the set $\Wcal$ is regular enough, so that we can relate its Hausdorff dimension to its covering numbers \cite{mattila1999geometry}\footnote{\Cref{asmp:dimequiv} ensures that the Hausdorff dimension of $\Wcal$ coincides with another notion of dimension, called the Minkowski dimension, which is explained in detail in the supplement. We note that for many fractal-like sets, these two notions of dimensions are equal to each other (see \cite{mattila1999geometry} Chapter 5), which include $\alpha$-stable processes (see \cite{falconer2004fractal} Chapter16).}.
Under these conditions and an additional countability condition on $\Zcal$ (see \cite{bousquet2002stability} for similar assumptions), we present our first main result as follows.
\begin{thm}
\label{thm:dimm_dimh}
Assume that \Cref{asmp:bounded,asmp:minkowski,asmp:uni_dh,asmp:dimequiv} hold, and $\Zcal$ is countable.
Then,
for a sufficiently large $n$, we have
\begin{align}
  \sup_{\wb\in \Wcal_S} |\hat{\Rcal}(\wb,S) - \Rcal(\wb)|
  \leq  B\sqrt{\frac{ 2\dH\log(nL^2)}{n} + \frac{\log(1/\gamma)}{n}},
 \end{align}
 with probability at least $1- \gamma$ over $S \sim \mu_z^{\otimes n}$ and $U \sim \mu_u$.
\end{thm}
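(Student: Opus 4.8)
The plan is to bound the larger quantity $\sup_{\wb\in\Wcal}|\hat{\Rcal}(\wb,S)-\Rcal(\wb)|$, which dominates $\sup_{\wb\in\Wcal_S}|\hat{\Rcal}(\wb,S)-\Rcal(\wb)|$ because $\Wcal_S\subseteq\Wcal$. The key structural observation is that $\Wcal=\bigcup_{n\geq1}\bigcup_{S'\in\Zcal^n}\Wcal_{S'}$ is a deterministic function of $U$ alone — the union runs over \emph{all} conceivable datasets, not the realized one — so by \Cref{asmp:minkowski} it is independent of $S$, and we may condition on $U$ and treat $\Wcal$ as a fixed set. Two facts are then extracted from the hypotheses. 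First, since $\Zcal$ is countable the union defining $\Wcal$ is countable, so by countable stability of Hausdorff dimension together with \Cref{asmp:uni_dh}, $\dimh\Wcal=\sup_{n,S'}\dimh\Wcal_{S'}\leq\dH$. Second, the Frostman/Ahlfors-type regularity in \Cref{asmp:dimequiv} forces its exponent $s$ to equal $\dimh\Wcal$ (the upper bound $\mu(B_d(x,r))\le br^s$ with $\mu(\Wcal)>0$ gives $\dimh\Wcal\ge s$, the lower bound with $\mu(\rset^d)<\infty$ gives $\dimh\Wcal\le s$), hence $s\leq\dH$, and it yields quantitative control on covering numbers.

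Concretely, fix $0<\delta\leq 2r_0$ and take a maximal $\delta$-separated subset $\{\wb_1,\dots,\wb_N\}$ of $\Wcal$. The balls $B_d(\wb_j,\delta/2)$ are pairwise disjoint, each has $\mu$-mass at least $a(\delta/2)^s$ by the lower bound of \Cref{asmp:dimequiv}, and their disjoint union sits inside $\rset^d$ which has finite $\mu$-mass; hence $N\leq \mu(\rset^d)2^s a^{-1}\delta^{-s}=:C_0\delta^{-s}<\infty$. By maximality the balls $B_d(\wb_j,\delta)$ cover $\Wcal$, so $\{\wb_j\}_{j\le N}$ is a finite $\delta$-net of $\Wcal$ with $\log N\leq\log C_0+\dH\log(1/\delta)$.

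Working conditionally on $U$: for each fixed $j$, $\hat{\Rcal}(\wb_j,S)=\tfrac1n\sum_{i=1}^n\ell(\wb_j,z_i)$ is an average of $n$ i.i.d.\ random variables valued in $[0,B]$ with mean $\Rcal(\wb_j)$ — here $S\perp U$ is used so that the net points $\wb_j$ are frozen while $\{z_i\}$ remains i.i.d.\ from $\mu_z$ — so Hoeffding's inequality plus a union bound over $j$ give $\pr\bigl(\max_j|\hat{\Rcal}(\wb_j,S)-\Rcal(\wb_j)|>t\mid U\bigr)\leq 2N\exp(-2nt^2/B^2)$. For arbitrary $\wb\in\Wcal$, the nearest net point $\wb_j$ satisfies $\|\wb-\wb_j\|\le\delta$, and $L$-Lipschitzness of $\ell(\cdot,z)$ (\Cref{asmp:bounded}), applied to both $\hat{\Rcal}$ and $\Rcal$, gives $|\hat{\Rcal}(\wb,S)-\Rcal(\wb)|\leq 2L\delta+\max_j|\hat{\Rcal}(\wb_j,S)-\Rcal(\wb_j)|$. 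Taking $\delta=1/(L^2n)$ and $t=B\sqrt{(\log(2N)+\log(1/\gamma))/(2n)}$ so the failure probability equals $\gamma$, and combining the two terms via $x+y\le\sqrt{2(x^2+y^2)}$, produces a conditional-on-$U$ bound of $\sqrt{8L^{-2}n^{-2}+B^2n^{-1}(\log(2C_0)+\dH\log(L^2n)+\log(1/\gamma))}$; since it holds for $\mu_u$-a.e.\ $U$ it holds unconditionally. For $n$ large enough that $\delta\le 2r_0$ and that the lower-order terms $8L^{-2}n^{-2}+B^2n^{-1}\log(2C_0)$ are dominated by $B^2n^{-1}\dH\log(nL^2)=\Theta(\log n/n)$, this collapses to the claimed bound $B\sqrt{2\dH\log(nL^2)/n+\log(1/\gamma)/n}$.

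The main obstacle — and the reason \Cref{asmp:minkowski} and the countability of $\Zcal$ are both needed — is that $\Wcal_S$ depends on the very sample $S$ over which we want to concentrate, so one cannot naively net $\Wcal_S$ and union-bound; passing to the $S$-independent superset $\Wcal$ circumvents this, but then one needs \Cref{asmp:dimequiv} to convert the qualitative bound $\dimh\Wcal\le\dH$ into a quantitative covering-number estimate, since $\dimh$ alone does not control covering numbers. A secondary point is measurability of $\sup_{\wb\in\Wcal_S}|\hat{\Rcal}(\wb,S)-\Rcal(\wb)|$, which follows from continuity of $\wb\mapsto\hat{\Rcal}(\wb,S)-\Rcal(\wb)$ and separability of $\rset^d$.
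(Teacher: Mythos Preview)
Your proposal is correct and follows essentially the same approach as the paper: pass from $\Wcal_S$ to the $S$-independent set $\Wcal$, use countable stability of $\dimh$ to obtain $\dimh\Wcal\le\dH$, convert this via \Cref{asmp:dimequiv} into a covering-number bound, and finish with a standard net-plus-Hoeffding argument.

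The one noteworthy difference is in how covering numbers are extracted from \Cref{asmp:dimequiv}. The paper invokes Mattila's theorem to deduce $\dimmu\Wcal=\dimh\Wcal\le\dH$, then uses the $\limsup$ definition of $\dimmu$ to obtain $\log|N_{\delta_n}|\le 2\dimmu\Wcal\,\log(1/\delta_n)$ for $n$ exceeding some threshold, with the choice $\delta_n=(nL^2)^{-1/2}$. You instead argue directly: first identify the Ahlfors exponent $s$ with $\dimh\Wcal$ (hence $s\le\dH$), then use a maximal $\delta$-separated set and the lower density bound $\mu(B_d(x,r))\ge ar^s$ to obtain the quantitative estimate $|N_\delta|\le C_0\delta^{-s}$ valid for all $\delta\le 2r_0$. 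Your route is slightly more direct and gives explicit constants (at the cost of carrying $C_0$ into the ``sufficiently large $n$'' absorption step), while the paper's route hides the constants inside the $\limsup$ threshold; both land on the same bound.
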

This theorem shows that the generalization error can be controlled by the uniform Hausdorff dimension of the algorithm $\Alg$, along with the constants inherited from the regularity conditions. A noteworthy property of this result is that it does not have a direct dependency on the number of parameters $d$; on the contrary, we observe that $\dH$ plays the role that $d$ plays in standard bounds \cite{anthony2009neural}, implying that $\dH$ acts as the \emph{intrinsic dimension} and mimics the role of the VC dimension in binary classification \cite{shalev2014understanding}.
Furthermore, in combination with Proposition~\ref{prop:feller_dim} that indicates $\dH$ decreases as the processes $\Wm^{(S)}$ get heavier-tailed, Theorem~\ref{thm:dimm_dimh} implies that the generalization error can be controlled by the tail behavior of the process: heavier-tails imply less generalization error.

We note that the countability condition on $\Zcal$ is crucial for Theorem~\ref{thm:dimm_dimh}. Thanks to this condition, in our proof, we invoke the stability properties of the Hausdorff dimension and we directly obtain a bound on $\dimh \Wcal$. This bound combined with \Cref{asmp:dimequiv} allows us to control the covering number of $\Wcal$, and then the desired result can be obtained by using standard covering techniques \cite{anthony2009neural,shalev2014understanding}.
Besides, the $\log n$ dependency of $\dH$ is not crucial; in the supplementary document, we show that the $\log n$ factor can be replaced with any increasing function (e.g, $\log \log n$) by using a chaining argument, with the expense of having $L$ as a multiplying factor (instead of $\log L$). Theorem~\ref{thm:dimm_dimh} holds for sufficiently large $n$; 
however, this threshold is not a-priori known, which is a limitation of the result. 

In our second main result, we control the generalization error without the countability assumption on $\Zcal$, and more importantly we will also relax \Cref{asmp:uni_dh}. Our main goal will be to relate the error to the Hausdorff dimension of a single $\Wcal_S$, as opposed to $\dH$, which uniformly bounds $\dimh \Wcal_{S'}$ for every $S' \in \Zcal^n$.
In order to achieve this goal, we introduce a technical assumption, which lets us control the statistical dependency between the training set $S$ and the set of parameters $\Wcal_S$.

For any $\delta>0$, let us consider a finite $\delta$-cover of $\Wcal$ by closed balls of radius $\delta$, whose centers are on the fixed grid 
$\left\{\left( \frac{(2j_1+1) \delta}{2\sqrt{d}},\dots,\frac{(2j_d+1) \delta}{2\sqrt{d}} \right): j_i \in \mathbb{Z}, i=1,\dots,d\right\}$,
and collect the center of each ball in the set $N_\delta$. 
Then, for each $S$, let us define the set
$N_\delta(S):= \{x\in N_\delta: B_d(x,\delta)\cap \Wcal_S \neq \emptyset\}$, where $B_d(x,\delta) \subset \rset^d$ denotes the closed ball centered around $x \in \rset^d$ with radius $\delta$.
\begin{assumption}\label{asmp:decoupling2}

  Let $\Zcal^\infty := (\Zcal \times \Zcal \times \cdots)$ denote the countable product endowed with the product topology and let $\mathfrak{B}$ be the Borel $\sigma$-algebra generated by $\Zcal^\infty$.
Let $\mathfrak{F}, \mathfrak{G}$ be the sub-$\sigma$-algebras of $\mathfrak{B}$ generated by the collections of random variables given by
$\{ \hat{\Rcal}(w,S): w \in \Wcal, n \geq 1\}$ and
$\Big\{ \mathds{1}\left\{w\in N_{\delta}(\Wcal_S)\right\}: \delta\in \mathbb{Q}_{>0}, w\in N_\delta, n \geq 1 \Big\}$ respectively.
There exists a constant $\Mut \geq 1$ such that for any $A\in \mathfrak{F}$, $B\in \mathfrak{G}$ we have
$\pr\left[ A \cap B\right] \leq \Mut \pr\left[ A \right] \pr[B].$
\end{assumption}
This assumption is common in statistics and is sometimes referred to as the $\psi$-mixing condition, a measure of weak dependence often used in proving limit theorems, see e.g., \cite{bradley1983}; yet, it is unfortunately hard to veryify this condition in practice. In our context \Cref{asmp:decoupling2} essentially quantifies the dependence between $S$ and the set $\Wcal_S$ , through the constant $M > 0$: smaller $M$ indicates that the dependencce of $\hat{\Rcal}$ on the training sample $S$ is weaker. This concept is also similar to the mutual information used recently in \cite{xu2017information,asadi2018chaining,russo2019much} and to the concept of stability \cite{bousquet2002stability}.
\begin{thm}
\label{thm:dimm_app2}
Assume that \Cref{asmp:minkowski,asmp:decoupling2,asmp:bounded} hold, and \Cref{asmp:dimequiv} holds with $\Wcal_S$ in place of $\Wcal$ for all $n \geq 1$ and $S\in  \Zcal^n$,  (with $s$, $a$, $b$, $r_0$ can potentially depend on $n$ and $S$). Then, for $n$
sufficiently large, we have
  \begin{align}
    \sup\nolimits_{\wb \in \Wcal_S} |\hat{\Rcal}(\wb,S)-\Rcal(\wb)| &\leq  2B\sqrt{\frac{[\dimh \Wcal_S+1] \log^2(nL^2)}{n} + \frac{ \log(7\Mut/\gamma)}{n}},
  \end{align}
with probability at least $1- \gamma$ over $S \sim \mu_z^{\otimes n}$ and $U \sim \mu_u$.
\end{thm}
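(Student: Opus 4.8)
The plan is to follow the covering-number route of Theorem~\ref{thm:dimm_dimh}, but with the fixed net of $\Wcal$ replaced by the \emph{data-dependent} net $N_\delta(\Wcal_S)$, decoupling the two appearances of $S$ — one through the empirical risks, one through the geometry of $\Wcal_S$ — via the mixing constant $\Mut$ of \Cref{asmp:decoupling2}. Fix a rational scale $\delta=\delta_n$, to be chosen at the end. Every $\wb\in\Wcal_S$ lies in $B_d(x,\delta)$ for some center $x\in N_\delta(\Wcal_S)$ (take the nearest grid point: by the grid spacing it is within $\delta/2$ of $\wb$), so by the $L$-Lipschitzness of $\ell$ (\Cref{asmp:bounded}) both $\hat{\Rcal}(\cdot,S)$ and $\Rcal(\cdot)$ vary by at most $L\delta$ between $\wb$ and $x$, giving
\[
\sup\nolimits_{\wb\in\Wcal_S}\bigl|\hat{\Rcal}(\wb,S)-\Rcal(\wb)\bigr|\;\le\;2L\delta+\sup\nolimits_{x\in N_\delta(\Wcal_S)}\bigl|\hat{\Rcal}(x,S)-\Rcal(x)\bigr|.
\]
By \Cref{asmp:minkowski}, $\Wcal\supseteq\Wcal_S$ has finite diameter almost surely, so $N_\delta(\Wcal_S)$ is a.s.\ finite; write $K_\delta(S)$ for its cardinality.

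The core step controls the discrete supremum with a threshold allowed to depend on the realised $K_\delta(S)$. For a \emph{deterministic} finite grid subset $T$, the event $\{\exists x\in T:|\hat{\Rcal}(x,S)-\Rcal(x)|>t\}$ is $\mathfrak{F}$-measurable, while $\{N_\delta(\Wcal_S)=T\}\in\mathfrak{G}$; the latter events are disjoint over $T$ and their probabilities sum to one. Hoeffding's inequality (boundedness by $B$, i.i.d.\ sampling; \Cref{asmp:bounded}) bounds the $\mathfrak{F}$-event by $2|T|\exp(-2nt^2/B^2)$. Partitioning over $T$ and applying \Cref{asmp:decoupling2},
\[
\pr\Bigl[\sup\nolimits_{x\in N_\delta(\Wcal_S)}|\hat{\Rcal}(x,S)-\Rcal(x)|>t\bigl(K_\delta(S)\bigr)\Bigr]\le\Mut\sum\nolimits_{T}\pr[N_\delta(\Wcal_S)=T]\cdot 2|T|\exp\!\bigl(-2n\,t(|T|)^2/B^2\bigr).
\]
Choosing $t(k)=B\sqrt{\log(2k\Mut/\gamma)/(2n)}$ makes each summand at most $(\gamma/\Mut)\pr[N_\delta(\Wcal_S)=T]$, so the bound is $\le\gamma$; hence with probability at least $1-\gamma$,
\[
\sup\nolimits_{x\in N_\delta(\Wcal_S)}|\hat{\Rcal}(x,S)-\Rcal(x)|\le B\sqrt{\frac{\log\!\bigl(2\Mut K_\delta(S)/\gamma\bigr)}{2n}}.
\]

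It remains to translate $K_\delta(S)$ into a dimension. A volume count on the fixed grid gives $K_\delta(S)\le C_d\,N(\Wcal_S,\delta)$ for the $\delta$-covering number of $\Wcal_S$, with $C_d$ depending only on $d$; and \Cref{asmp:dimequiv} applied to $\Wcal_S$ gives, via a maximal $\delta$-packing whose balls have $\mu$-mass $\ge a(\delta/2)^{s}$ and total mass $\le\mu(\rset^d)<\infty$, the bound $N(\Wcal_S,\delta)\le c_S\,\delta^{-s}$ for $\delta\le 2r_0$, where $s=\dimh\Wcal_S$ (the Minkowski and Hausdorff dimensions of $\Wcal_S$ then agree). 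Thus $\log K_\delta(S)\le\dimh\Wcal_S\log(1/\delta)+\log(C_d c_S)$. I would then pick $\delta=\delta_n$ small enough that $L\delta_n=O(1/n)$ (so the $2L\delta$ term is absorbed) while $\log(1/\delta_n)$ is large — of order $\log^2(nL^2)$ — enough to dominate the random constant $\log(C_d c_S)$ and to supply the additive ``$+1$''; for $n$ large this also ensures $\delta_n\le 2r_0$. Substituting and loosening constants yields
\[
\sup\nolimits_{\wb\in\Wcal_S}|\hat{\Rcal}(\wb,S)-\Rcal(\wb)|\le 2B\sqrt{\frac{[\dimh\Wcal_S+1]\log^2(nL^2)}{n}+\frac{\log(7\Mut/\gamma)}{n}},
\]
which is the claim.

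The main obstacle is the double dependence on $S$: since $\Wcal_S$ is itself random through $S$, one cannot union bound over a data-independent net, and \Cref{asmp:decoupling2} is precisely what permits the decoupling at the price of the factor $\Mut$. Two secondary technical points are (i) the $\mathfrak{F}$-measurability of the grid-center empirical risks, handled by a measurable-selection argument replacing each occupied grid center by a genuine point of $\Wcal_S$ in its $\delta$-ball, and (ii) the absorption of the random Ahlfors constants $(a,b,r_0)$ of \Cref{asmp:dimequiv}; these are why the bound carries an extra $\log(nL^2)$ factor and the additive $1$ compared with Theorem~\ref{thm:dimm_dimh}.
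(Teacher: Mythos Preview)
Your decoupling step --- partition over the realized grid set $T=N_\delta(\Wcal_S)$, apply \Cref{asmp:decoupling2} to $\{N_\delta(\Wcal_S)=T\}\in\mathfrak{G}$ against the Hoeffding event in $\mathfrak{F}$, and choose $t(|T|)$ so each summand contributes $(\gamma/\Mut)\,\pr[N_\delta(\Wcal_S)=T]$ --- is correct and in fact somewhat more direct than the paper's route, which stratifies over intervals $J_k(\rho_n)$ of the value of $\dimh\Wcal_S$ before applying \Cref{asmp:decoupling2} pointwise in $w$. The genuine gap is your last paragraph. You reach a high-probability bound in terms of $\log K_{\delta_n}(S)$ and then substitute $\log K_{\delta_n}(S)\le \dimh\Wcal_S\cdot\log(1/\delta_n)+\log(C_d c_S)$, valid only for $\delta_n\le 2r_0(S)$. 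But the hypothesis explicitly allows $a,b,r_0$ (hence $c_S$ and $r_0(S)$) to depend on $S$ with no uniform control; therefore no \emph{deterministic} choice of $\delta_n$ can guarantee $\delta_n\le 2r_0(S)$ and $\log(C_d c_S)\le\log^2(nL^2)$ across all $S$. The sentence ``for $n$ large this also ensures $\delta_n\le 2r_0$'' is precisely where the argument breaks: the threshold for $n$ you obtain would depend on $S$, whereas the theorem requires one that does not. The extra $\log(nL^2)$ factor and the additive $+1$ cannot, by themselves, absorb an unbounded random quantity.

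The paper closes exactly this gap with Egoroff's theorem. The limit $\sup_{r<\delta}\log|N_r(\Wcal_S)|/\log(1/r)\to\dimh\Wcal_S$ as $\delta\to 0$ holds only pointwise in $S$; Egoroff upgrades it, for each $\delta'>0$, to \emph{uniform} convergence on a set $\mathfrak{Z}$ of probability $\ge 1-\delta'$: on $\mathfrak{Z}$ one has $|N_{\delta_n}(\Wcal_S)|\le(1/\delta_n)^{\dimh\Wcal_S+\alpha_n c(\delta',\delta_n)}$ with $c(\delta',\delta_n)\to 0$. Taking $\alpha_n=\log n$ and $\delta_n=1/\sqrt{nL^2}$, the correction $\alpha_n c(\delta',\delta_n)\log(1/\delta_n)$ is eventually bounded by $\log^2(nL^2)$ for $n$ past a threshold depending only on $\delta'=\gamma/(1+2\mathrm{e})$ --- not on $S$ --- which is simultaneously the source of the squared logarithm and of the constant $7$ in the bound. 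Grafting this Egoroff step onto your argument (and budgeting the extra $\delta'$ of failure probability for $S\notin\mathfrak{Z}$) would complete your proof.
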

This result shows that under \Cref{asmp:decoupling2}, we can replace $\dH$ in Theorem~\ref{thm:dimm_dimh} with $\dimh \Wcal_S$, at the expense of 
introducing the coupling coefficient $M$ into the bound.
We observe that two competing terms are governing the generalization error: in the case where $\dimh \Wcal_S$ is small, the error is dominated by the coupling parameter $M$, and vice versa. On the other hand, in the context of Proposition~\ref{prop:feller_dim}, $\dimh \Wcal_S \leq \beta_S$, $\mu_u$-almost surely, implying again that a heavy-tailed $\Wm^{(S)}$ would achieve smaller generalization error as long as the dependency between $S$ and $\Wcal_S$ is weak. 
\vspace{-5pt}
\section{Experiments}
\vspace{-5pt}
\label{sec:exps}

\begin{figure}[t]
\subfigure[Generalization Error]{ \label{fig:gen_error}
\includegraphics[width=0.45\textwidth]{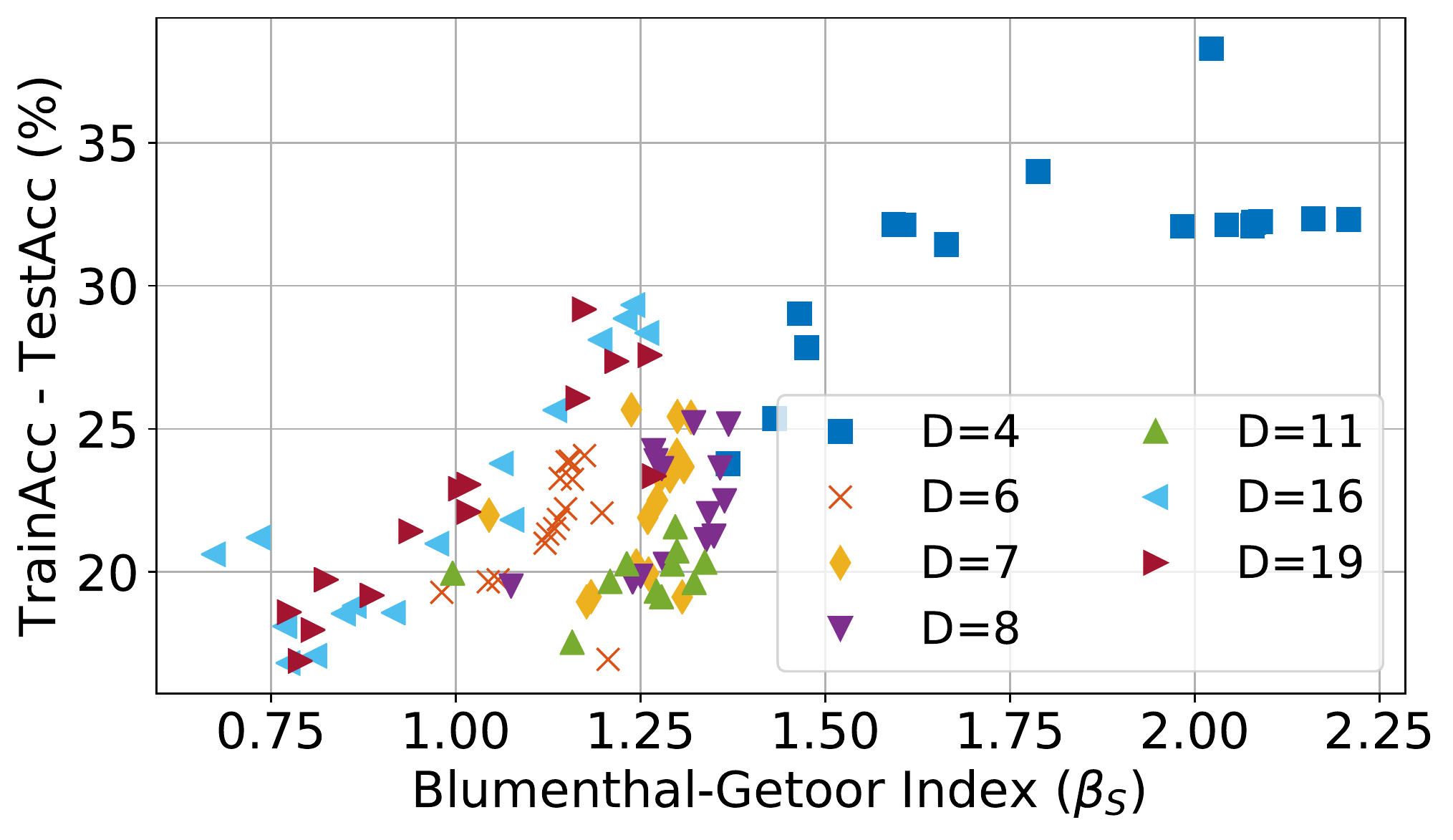}
}
\hfill
\subfigure[Test Accuracy]{ \label{fig:test_acc}
\includegraphics[width=0.45\textwidth]{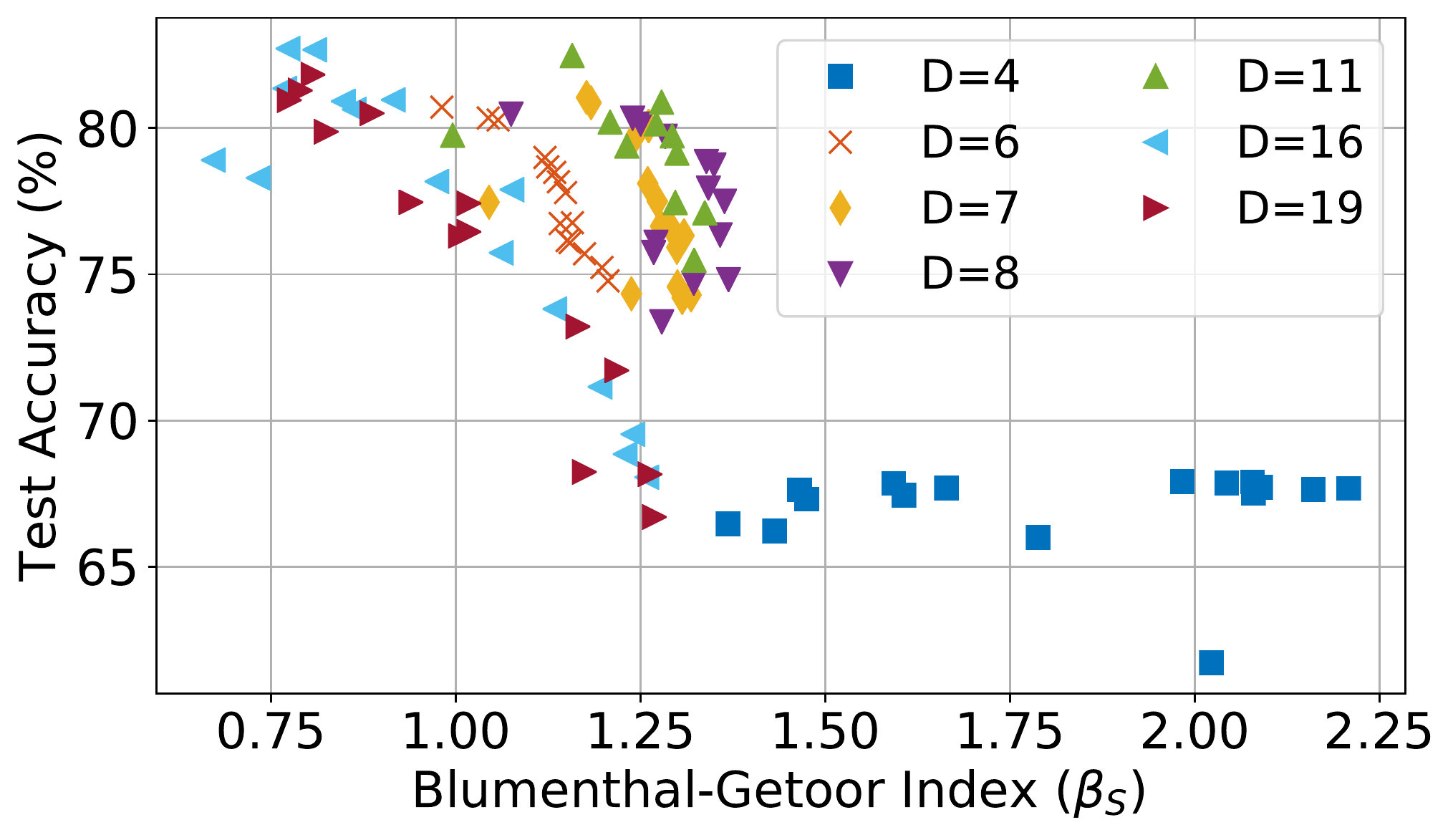}   
}
\vspace{-2mm}
\caption{Empirical study of generalization behavior on VGG\cite{VGG} networks with various depth values (the number of layers are shown as $D$). As our theory predicts, the generalization error is strongly correlated with $\beta_S$. As $\beta_S \in (0,2]$, the estimates exceeding $2$ is an artifact of the estimator.}
\label{fig:results}
\vspace{-3mm}
\end{figure}

We empirically study the generalization behavior of deep neural networks from the Hausdorff dimension perspective. We use VGG networks \cite{VGG} as they perform well in practice, and their depth (the number of layers) can be controlled directly. We vary the number of layers from $D=4$ to $D=19$, resulting in the number of parameters $d$ between $1.3$M and $20$M. We train models on the CIFAR-10 dataset \cite{cifar10} using SGD and we choose various stepsizes $\eta$, and batch sizes $B$. We provide full range of parameters and additional implementation details in the supplementary document. The code can be found in \url{https://github.com/umutsimsekli/Hausdorff-Dimension-and-Generalization}.

We assume that SGD can be well-approximated by the process~\eqref{eqn:sde_general}. Hence, to bound the corresponding $\dimh \Wcal_S$ to be used in Theorem~\ref{thm:dimm_app2}, we invoke Proposition~\ref{prop:feller_dim}, which relies on the existence of a point $w_S$ around which the process behaves like a regular L\'{e}vy process with exponent $\psi_S$. Considering the empirical observation that SGD exhibits a `diffusive behavior' around a local minimum \cite{pmlr-v80-baity-jesi18a}, we take $w_S$ to be the local minimum found by SGD and assume that the conditions of Proposition~\ref{prop:feller_dim} hold around that point. This perspective indicates that the generalization error can be controlled by the BG index $\beta_S$ of the L\'{e}vy process defined by $\psi_S(\xi)$; the sub-symbol of the process \eqref{eqn:sde_general} around $w_S$.

Estimating the BG index for a general L\'{e}vy process is a challenging task; however, the choice of the SDE \eqref{eqn:sde_general} imposes some structure on $\psi_S$, which lets us express $\beta_S$ in a simpler form. Inspired by the observation that the tail-index of the \emph{gradient noise} in a multi-layer neural network differs from layer to layer, as reported in \cite{csimcsekli2019heavy}, we will assume that, \emph{around the local minimum} $w_S$, the dynamics of SGD will be similar to the L\'{e}vy motion with frozen coefficients: $\Sigma_2(w_S) \Lm^{\boldsymbol{\alpha}(w_S)}$, see \eqref{eqn:sde_general} for definitions. We will further impose that, around $w_S$, the coordinates corresponding to the same layer $l$ have the same tail-index $\alpha_l$.
Under this assumption, the BG index can be analytically computed as $\beta_S = \max_l \alpha_l \in (0,2]$ \cite{meerschaert2005dimension,hendricks1973dimension}. 
While the range $(0,2]$ might seem narrow at the first sight, we note that $\dimh \Wcal_S$; hence $\beta_S$ determines the \emph{order} of the generalization error and this parameter gets closer to $0$ with more layers added to the network (see Figure~\ref{fig:results}).
Thanks to this simplification, we can easily compute $\beta_S$, by first estimating each $\alpha_l$ by using 
the estimator proposed in \cite{bgindex}, which can efficiently estimate $\alpha_l$ by using multiple SGD iterates.

We trained all the models for 100 epochs and computed their $\beta_S$ over the last epoch, assuming that the iterations reach near local minima.
We monitor the generalization error in terms of the difference between the training and test accuracy with respect to the estimated $\beta_S$ in Figure~\ref{fig:gen_error}. We also plot the final test accuracy in Figure~\ref{fig:test_acc}. Test accuracy results validate that the models perform similarly to the state-of-the-art, which suggests that the empirical study matches the practically relevant application settings. Results in Figure~\ref{fig:gen_error} indicate that, as predicted by our theory, the generalization error is strongly correlated with $\beta_S$, which is an upper-bound of the Hausdorff dimension. With increasing $\beta_S$ (implying increasing Hausdorff dimension), the generalization error increases, as our theory indicates. Moreover, the resulting behavior validates the importance of considering $\dimh \Wcal_S$ as opposed to ambient Euclidean dimension: for example, the number of parameters in the 4-layer network is significantly lower than other networks; however, its Hausdorff dimension as well as generalization error are significantly higher. Even more importantly, there is no monotonic relationship between the number of parameters and $\dimh \Wcal_S$. In other words, increasing depth is not always beneficial from the generalization perspective. It is only beneficial if it also decreases $\dimh \Wcal_S$. We also observe an interesting behavior: the choice of $\eta$ and $B$ seems to affect $\beta_S$, indicating that the choice of the algorithm parameters can impact the tail behavior of the algorithm. 
In summary, our theory holds over a large selection of depth, step-sizes, and batch sizes when tested on deep neural networks. We provide additional experiments, both real and synthetic, over a collection of model classes in the supplementary document.

\vspace{-5pt}
\section{Conclusion}
\vspace{-5pt}

In this paper, we rigorously tied the generalization in a learning task
to the tail properties of the underlying training algorithm, shedding light on an empirically observed phenomenon.
We established this relationship through the Hausdorff dimension of the SDE approximating the algorithm,
and proved a generalization error bound based on this notion of complexity.
Unlike the standard ambient dimension, our bounds do not necessarily grow with the number of parameters in the network,
and they solely depend on the tail behavior of the training process,
providing an explanation for the
implicit regularization effect of heavy-tailed SGD.

\section*{Broader Impact}
Our work is largely theoretical, studying the generalization properties of deep networks. Our results suggest that the fractal structure and the fractal dimensions of deep learning models can be an accurate metric for the generalization error; hence, in a broader context, we believe that our theory would be useful for practitioners using deep learning tools. On the other hand, our work does not have a direct ethical or societal consequence due to its theoretical nature. 

\section*{Acknowledgments and Disclosure of Funding}
\vspace{-3pt}

In an earlier version of the manuscript (published at NeurIPS 2020), we identified an imprecision in Definition~\ref{def:unif_dimh} and a mistake in the statement and the proof of Theorem~\ref{thm:dimm_app2}, which are now fixed. The authors are grateful to Berfin \c{S}im\c{s}ek and Xiaochuan Yang for fruitful discussions, 
and thank Vaishnavh Nagarajan for pointing out the imprecision in Definition~\ref{def:unif_dimh}.
The contribution of U.\c{S}. to this work
is partly supported by the French National Research Agency (ANR) as a part of the FBIMATRIX
(ANR-16-CE23-0014) project.

\bibliographystyle{alpha}
\bibliography{generalization} 


\newpage

\begin{center}

\Large \bf Hausdorff Dimension, Heavy Tails, and Generalization in Neural Networks\\ {\large SUPPLEMENTARY DOCUMENT}

\end{center}

\setcounter{section}{0}
\setcounter{equation}{0}
\setcounter{figure}{0}
\setcounter{table}{0}

\newtheorem{corollary}{Corollary}
\newtheorem{theorem}{Theorem}

\newtheorem{proposition}{Proposition}

\renewcommand{\theequation}{S\arabic{equation}}
\renewcommand{\thefigure}{S\arabic{figure}}
\renewcommand{\thelemma}{S\arabic{lemma}}
\renewcommand{\theproposition}{S\arabic{proposition}}
\renewcommand{\theprop}{S\arabic{prop}}
\renewcommand{\thecorollary}{S\arabic{corollary}}
\renewcommand{\thetheorem}{S\arabic{theorem}}
\renewcommand{\thethm}{S\arabic{thm}}
\renewcommand{\thesection}{S\arabic{section}}
\renewcommand{\thedefn}{S\arabic{defn}}

\section{Additional Experimental Results and Implementation Details}
\label{sec:add_exps}
\subsection{Comparison with Other Generalization Metrics for Deep Networks}
In this section, we empirically analyze the proposed metric with respect to existing generalization metrics, developed for neural networks. Specifically, we consider the `flat minima' argument of Jastrzevski et al.\ \cite{jastrzkebski2017three} and plot the generalization error vs $\eta/B$ which is the ratio of step size to the batch size. As a second comparison, we use heavy-tailed random matrix theory based metric of Martin and Mahoney \cite{martin2019traditional}. We plot the generalization error with respect to each metric in Figure~\ref{fig:comparison}. As the results suggest, our metric is the one which correlates best with the empirically observed generalization error. The metric proposed by Martin and Mahoney \cite{martin2019traditional} fails for the low number of layers and the resulting behavior is not monotonic. Similarly, $\eta/B$ captures the relationship for very deep networks (for $D=16 \& 19$), however, it fails for other settings. 

We also note that the norm-based capacity metrics \cite{neyshabur2015norm} typically increase with the increasing dimension $d$, we refer to \cite{neyshabur2017exploring} for details.

\begin{figure}[h!]
\begin{tabular}{@{}c@{}c@{}c@{}}
\includegraphics[width=0.33\textwidth]{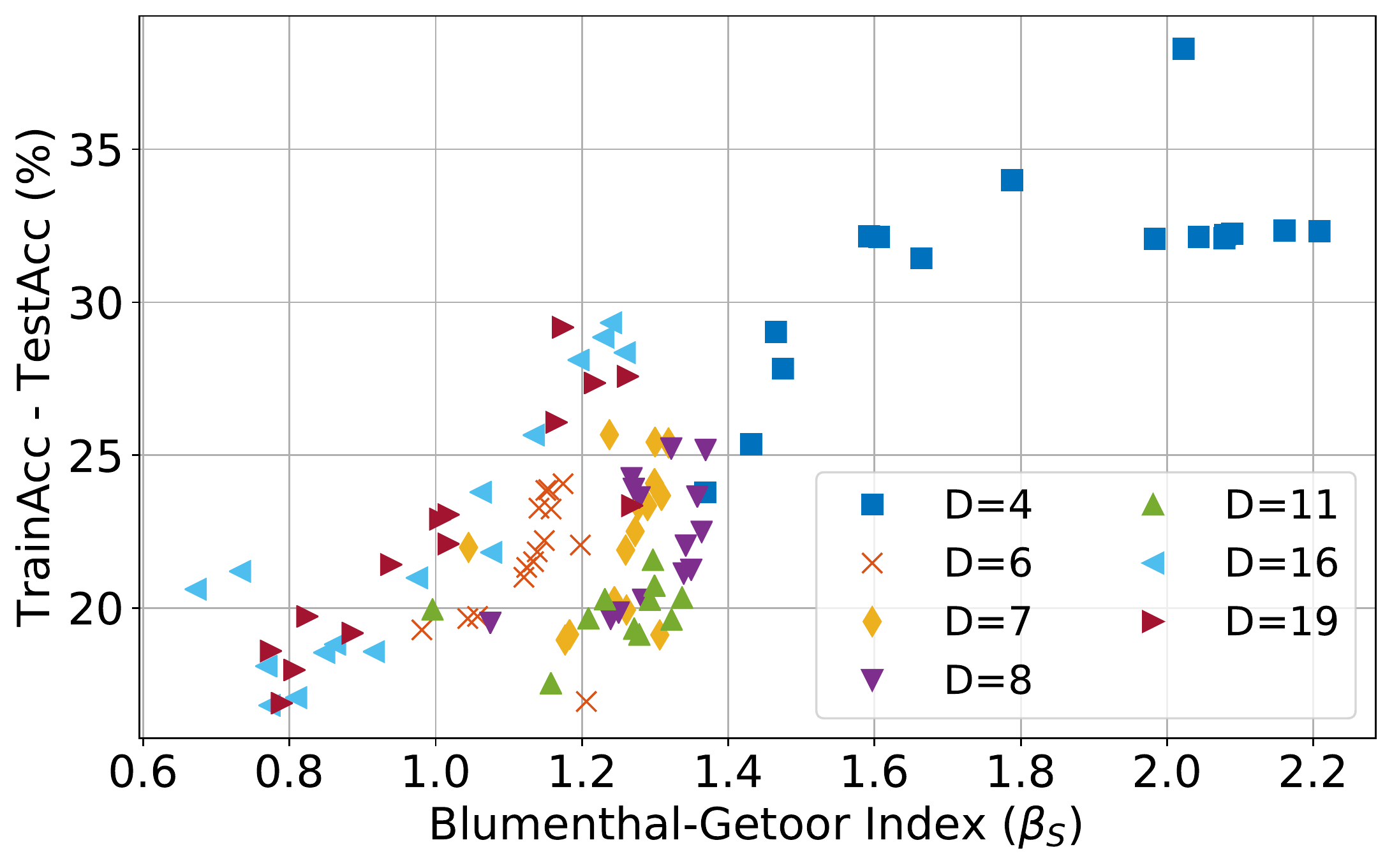} & 
\includegraphics[width=0.33\textwidth]{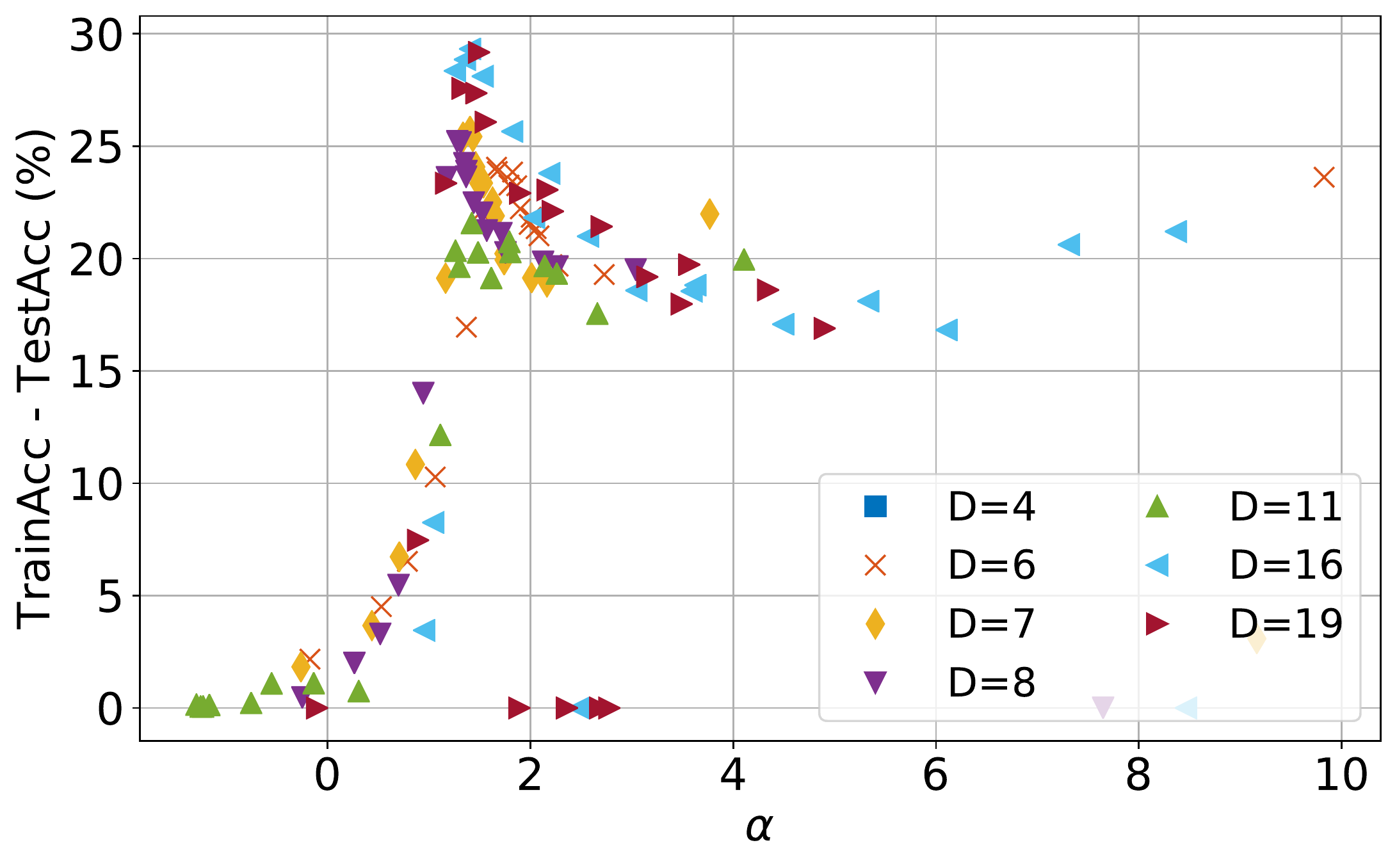} &
\includegraphics[width=0.33\textwidth]{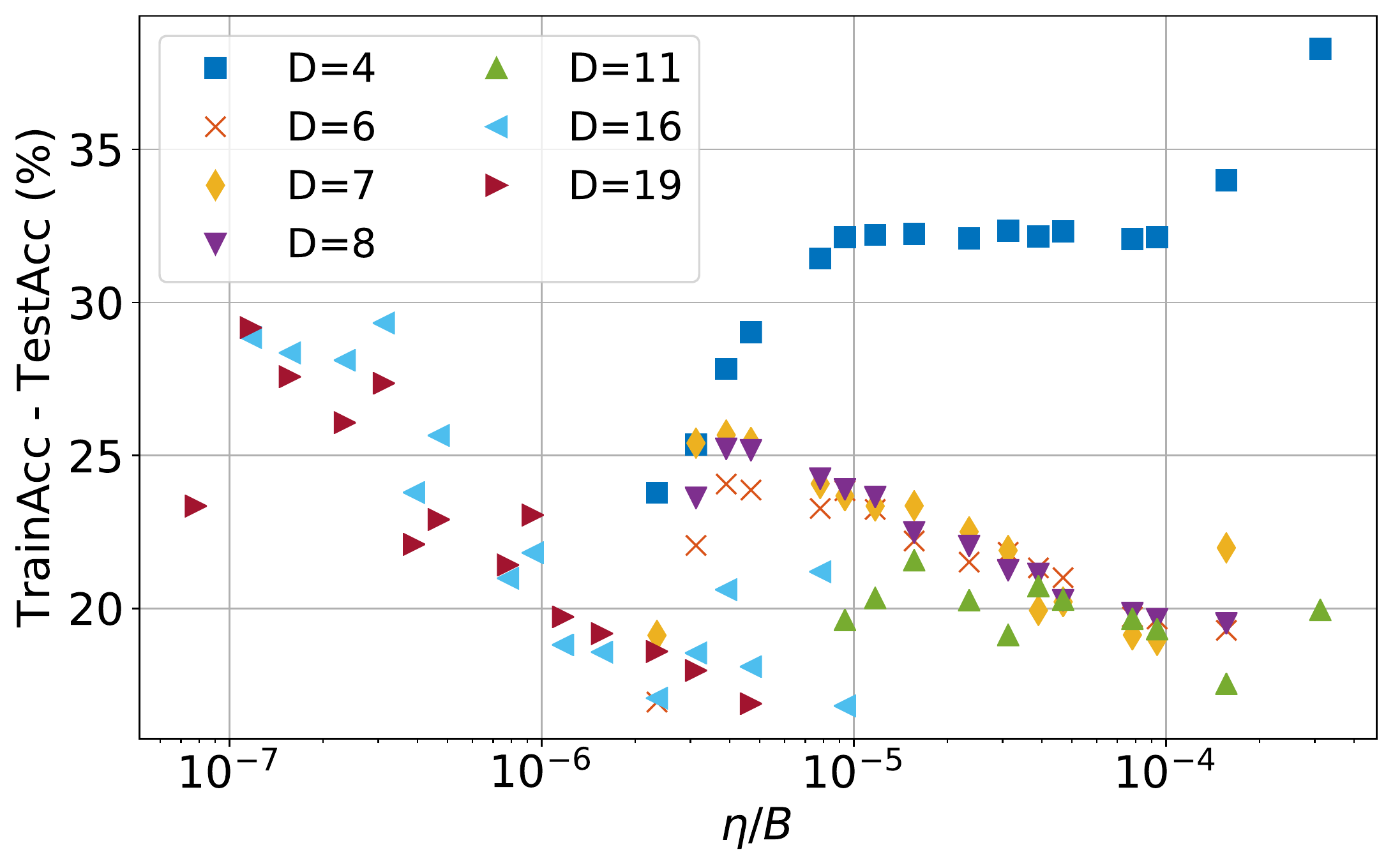} \\
(a) Hausdorff Dimension [Ours] & (c) $\alpha$ \cite{martin2019traditional} & (b) $\eta / B$ \cite{jastrzkebski2017three} \\
\end{tabular}
\caption{Empirical comparison to other capacity metrics.}
\label{fig:comparison}
\end{figure}

\subsection{Synthetic Experiments}
 We consider a simple synthetic logistic regression problem, where the data distribution is a Gaussian mixture model with two components. Each data point $z_i \equiv (x_i,y_i) \in \Zcal =  \rset^{d} \times \{-1,1\}$ is generated by simulating the model: $y_i \sim \mathrm{Bernoulli}(1/2)$ and $x_i|y_i \sim \Ncal(m_{y_i}, 100 \mathrm{I}_{d})$, where the means are drawn from a Gaussian: $m_{-1}, m_1 \sim \Ncal(0, 25 \mathrm{I}_d)$. The loss function $\ell$ is the logistic loss as $\ell(w,z) = \log (1+\exp (-y x^{\top} w))$.

\begin{figure}[h]
\centering
\includegraphics[width=0.4\textwidth]{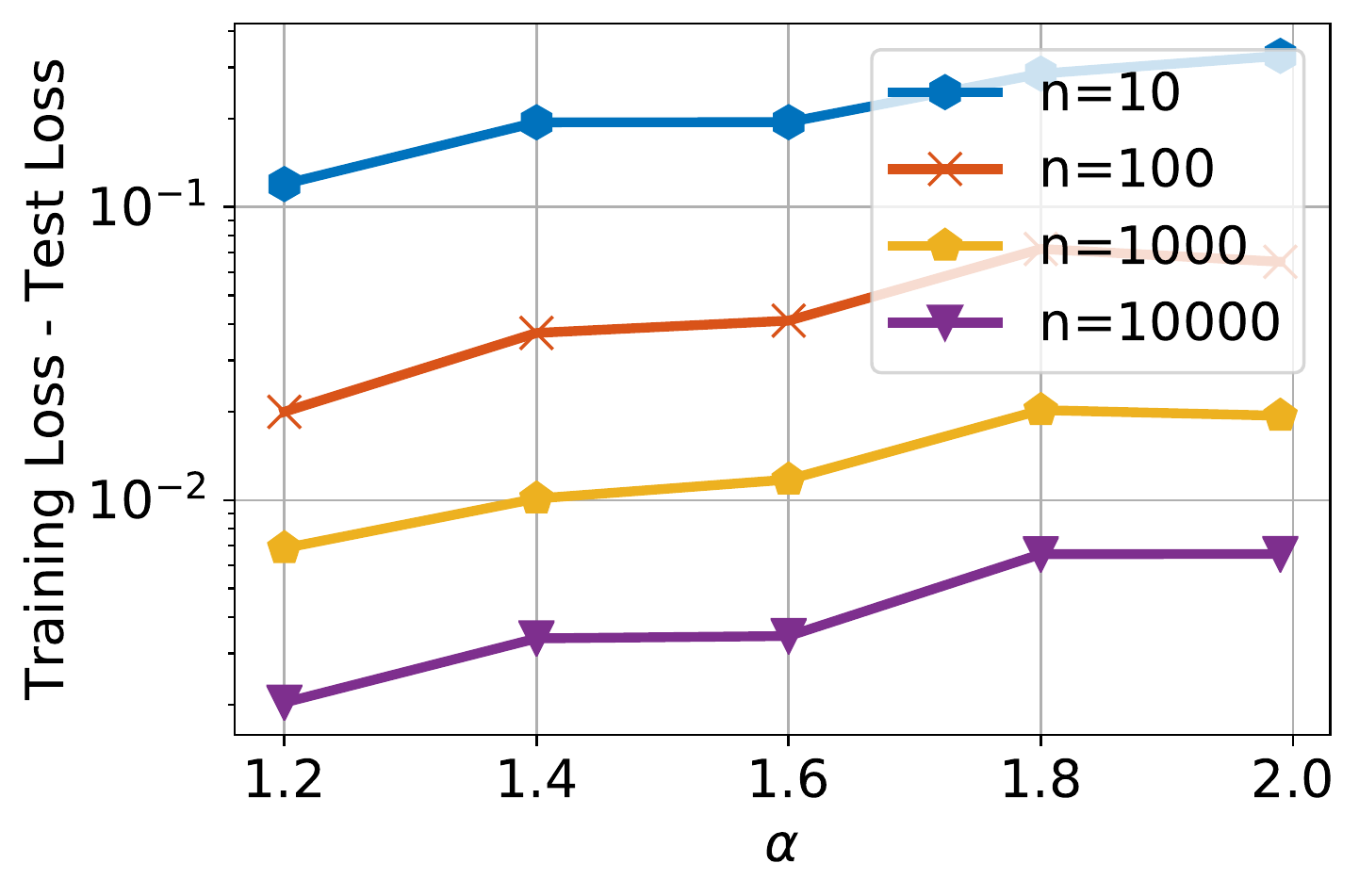} 
\caption{Results on synthetic data.}
\label{fig:gen_error_synth}
\end{figure}

As for the algorithm, we consider a data-independent multivariate stable process: $[\Alg(S)]_t = \Lm^{\alpha}_t$ for any $S \in \Zcal^n$, where $\Lm^{\alpha}_1$ is distributed with an elliptically contoured $\alpha$-stable distribution with $\alpha \in (0,2]$ (see \ Section~\ref{sec:bg}): when $\alpha=2$, $\Lm^{\alpha}_t$ is just a Brownian motion, as $\alpha$ gets smaller, the process becomes heavier-tailed. By Theorem~4.2 of \cite{blumenthal1960some}, $\Alg$ has the uniform Hausdorff dimension property with $\dH = \alpha$ independently from $d$ when $d\geq 2$.

We set $d=10$ and generate points to represent the whole population, i.e., $\{z_i\}_{i=1}^{n_{\text{tot}}}$ with $n_{\text{tot}} = 100\text{K}$. Then, for different values of $\alpha$, we simulate $\Alg$ for $t\in [0,1]$, by using a small step-size $\eta = 0.001$ (the total number of iterations is hence $1/\eta$). We finally draw $20$ random sets $S$ with $n$ elements from this population, and we monitor the maximum difference $ \sup_{\wb \in \Wcal_S}|\hat{\Rcal}(\wb,S)-\Rcal(\wb)|$ for different values of $n$. We repeat the whole procedure $20$ times and report the average values in Figure~\ref{fig:gen_error_synth}. We observe that the results support Theorems~\ref{thm:dimm_dimh} and \ref{thm:dimm_app2}: for every $n$, the generalization error decreases with decreasing $\alpha$, hence illustrates the role of the Hausdorff dimension.

\subsection{Implementation Details for the Deep Neural Network Experiments}
\label{sec:exp_det}
In this section, we provide the additional details which are skipped in the main text for the sake of space. We use the following VGG-style neural networks with various number of layers as
\begin{itemize}
\item \textbf{VGG4}: Conv(512) - ReLU - MaxPool - Linear  
\item \textbf{VGG6}: Conv(256) - ReLU - MaxPool - Conv(512) - ReLU - MaxPool -  Conv(512) - ReLU - MaxPool - Linear  
\item \textbf{VGG7}: Conv(128) - ReLU - MaxPool - Conv(256) - ReLU - MaxPool - Conv(512) - ReLU - MaxPool -  Conv(512) - ReLU - MaxPool - Linear  
\item \textbf{VGG8}:  Conv(64) - ReLU - MaxPool - Conv(128) - ReLU - MaxPool - Conv(256) - ReLU - MaxPool - Conv(512) - ReLU - MaxPool -  Conv(512) - ReLU - MaxPool - Linear  
\item \textbf{VGG11}: Conv(64) - ReLU - MaxPool - Conv(128) - ReLU - MaxPool - Conv(256) - ReLU - Conv(256) - ReLU - MaxPool - Conv(512) - ReLU -  Conv(512) - ReLU - MaxPool -  Conv(512) - ReLU - Conv(512) - ReLU - MaxPool - Linear  
\item \textbf{VGG16}: Conv(64) - ReLU - Conv(64) - ReLU - MaxPool - Conv(128) - ReLU - Conv(128) - ReLU - MaxPool - Conv(256) - ReLU - Conv(256) - ReLU -  Conv(256) - ReLU - MaxPool - Conv(512) - ReLU -  Conv(512) - ReLU - Conv(512) - ReLU - MaxPool -  Conv(512) - ReLU - Conv(512) - ReLU -  Conv(512) - ReLU - MaxPool - Linear  
\item \textbf{VGG19}: Conv(64) - ReLU - Conv(64) - ReLU - MaxPool - Conv(128) - ReLU - Conv(128) - ReLU - MaxPool - Conv(256) - ReLU - Conv(256) - ReLU -  Conv(256) - ReLU - Conv(256) - ReLU - MaxPool - Conv(512) - ReLU -  Conv(512) - ReLU - Conv(512) - ReLU - Conv(512) - ReLU - MaxPool -  Conv(512) - ReLU - Conv(512) - ReLU -  Conv(512) - ReLU - Conv(512) - ReLU - MaxPool - Linear  
\end{itemize}
where all convolutions are noted with the number of filters in the paranthesis. Moreover, we use the following hyperparameter ranges for step size of SGD: $\{1\mathrm{e}{-2}, 1\mathrm{e}{-3}, 3\mathrm{e}{-3}, 1\mathrm{e}{-4}, 3\mathrm{e}{-4}, 1\mathrm{e}{-5}, 3\mathrm{e}{-5}\}$ with the batch sizes $\{32, 64, 128, 256\}$. All networks are learned with cross entropy loss and ReLU activations, and no additional technique like batch normalization or dropout is used. We will also release the full source code of the experiments.

\section{Representing Optimization Algorithms as Feller Processes}
\label{sec:alg_feller}

Thanks to the generality of the Feller processes, we can represent multiple popular stochastic optimization algorithms as a Feller process, in addition to SGD. For instance, let us consider the following SDE:
\begin{align}
\rmd \Wm_t = - \Sigma_0(\Wm_t) \nabla f(\Wm_t) \rmd t +  \Sigma_1(\Wm_t) \rmd \Bm_t + \Sigma_2(\Wm_t) \rmd \Lm_t^{\boldsymbol{\alpha}(\Wm_t)}, \label{eqn:sde_general2}
\end{align}
where $\Sigma_0$, $\Sigma_1$, $\Sigma_2$ are $d\times d$ matrix-valued functions and the tail-index $\boldsymbol{\alpha}(\cdot)$ of $\Lm_t^{\boldsymbol{\alpha}}(\cdot)$ is also allowed to change depending on value of the state $\Wm_t$. We can verify that this SDE corresponds to a Feller process with $b(w) = - \Sigma_0(w) \nabla f(w)$, $\Sigma(w) = 2\Sigma_1(w)$, and an appropriate choice of $\nu$ \cite{huang2018homogenization}.
As we discussed in the main document, we the choice $\Sigma_0 = \mathrm{I}_d$ can represent SGD with state-dependent Gaussian and/or heavy-tailed noise. Besides, we can choose an appropriate $\Sigma_0$ in order to be able to represent optimization algorithms that use second-order geometric information, such as natural gradient \cite{amari1998natural} or stochastic Newton \cite{erdogdu2015convergence} algorithms.
On the other hand, by using the SDEs proposed in \cite{gao2018global,csimcsekli2020fractional,pmlr-v54-lu17b,orvieto2019role,barakat2018convergence}, we can further represent momentum-based algorithms such as SGD with momentum \cite{polyak1964some} as a Feller process.

\section{Decomposable Feller Processes and their Hausdorff Dimension }
\label{sec:decomp_feller}

In our study, we focus on decomposable Feller processes, introduced in \cite{schilling1998feller}. 
Let us consider a Feller process expressed by its symbol $\symbol$. We call the process defined by $\symbol$ decomposable at $w_0$, if there exists a point $w_0 \in \rset^d$ such that the symbol can be decomposed as 
\begin{align}
\symbol(w, \xi)= \psi(\xi)+ \tilde{\symbol}(w, \xi),
\end{align}
where $\psi(\xi) := \symbol(w_0,\xi)$ is the sub-symbol and $\tilde{\symbol}(w, \xi) = \symbol(w, \xi) - \symbol(w_0,\xi)$ is the reminder term. 
Let $\multix \in \mathbb{N}_{0}^{d}$ denote a multi-index\footnote{We use the multi-index convention $\multix = (j_1, \dots, j_d)$ with each $j_i \in \mathbb{N}_0$, and we use the notation $\partial_{w}^{\multix} \tilde{\symbol}(w, \xi) = \frac{\partial^{j_1}\tilde{\symbol}(w, \xi)}{\partial w_1^{j_1}} \cdots \frac{\partial^{j_d}\tilde{\symbol}(w, \xi)}{\partial w_d^{j_d}}$, and $|\multix| = \sum_{i=1}^d j_i$.}.
We assume that there exist functions $a ,\Phi_{\multix} : \rset^d \mapsto \rset$ such that the following holds:
\begin{itemize}%
\item $\symbol(x, 0) \equiv 0$ 
\item $\left\|\Phi_{0}\right\|_{\infty}<\infty$, and $\Phi_{\multix} \in L^{1}\left(\mathbb{R}^{d}\right)$ for all $|\multix| \leq d+1$.
\item $\left|\partial_{w}^{\multix} \tilde{\symbol}(w, \xi)\right| \leq \Phi_{\multix}(w)\left(1+a^{2}(\xi)\right)$,  for all $w, \xi \in \mathbb{R}^{d}$ and $|\multix| \leq d+1$.
\item $ a^{2}(\xi) \geq \kappa_{0}\|\xi\|^{r_{0}}$, for $\|\xi\|$ large, $r_0 \in (0,2]$, and $\kappa_0 >0$.
\end{itemize}

The Hausdorff dimension of the image of a decomposable Feller process is bounded, due to the following result. 

\begin{thm}[\cite{schilling1998feller} Theorem 4]
\label{thm:schilling}

Let $\symbol(x, \xi)$ generate a Feller process, denoted by $\left\{\Wm_{t}\right\}_{t \geq 0}$. Assume that $\symbol$ is decomposable at $w_0$ with the sub-symbol $\psi$. 
Then, for any given $T \in \rset_+$, we have 
\begin{align}
\dimh \Wm([0,T]) \leq \beta, \qquad \mathbb{P}^{x}\text{-almost surely},
\end{align}
where $\Wm([0,T]) := \{w: w = \Wm_t, \text{ for some } t \in [0,T]\}$ is the image of the process, $\mathbb{P}^{x}$ denotes the law of the process $\{\Wm_t\}_{t\geq 0}$ with initial value $x$, and $\beta$ is the upper Blumenthal-Getoor index of the L\'{e}vy process with the characteristic exponent $\psi(\xi)$, given as follows:
\begin{align}
\beta := \inf \Biggl\{\lambda \geq 0: \lim \nolimits_{\|\xi\| \rightarrow \infty} \frac{|\psi(\xi)|}{\|\xi\|^{\lambda}}=0\Biggr\}.
\end{align}

\end{thm}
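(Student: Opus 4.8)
The plan is to follow the classical Blumenthal--Getoor scheme for bounding the Hausdorff dimension of the range of a L\'{e}vy process, with the L\'{e}vy exponent replaced throughout by the symbol $\symbol$; decomposability is precisely the device that lets us reduce all symbol estimates back to the sub-symbol $\psi$. Fix the starting point $x$. If $\beta \geq d$ there is nothing to prove, since $\dimh \Wm([0,T]) \leq d$ always, so assume $\beta < d$. It then suffices to show that for every $\lambda \in (\beta, d]$ one has $\Hs^\lambda(\Wm([0,T])) < \infty$, $\pr^x$-almost surely; letting $\lambda$ decrease to $\beta$ along a rational sequence and using monotonicity of $\Hs^s$ in $s$ yields $\dimh \Wm([0,T]) \leq \beta$ a.s.

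The first, and main, step is a symbol-controlled maximal estimate: there is a constant $c_d$ depending only on $d$ such that, writing $\tau^x_r := \inf\{ t \geq 0 : \|\Wm_t - x\| \geq r\}$,
\[
\pr^x\bigl( \tau^x_r \leq t \bigr) \;\leq\; c_d\, t \sup_{\|y - x\| \leq r}\ \sup_{\|\xi\| \leq 1/r} |\symbol(y,\xi)|, \qquad t, r > 0 .
\]
This is a Dynkin-type bound: apply the Dynkin formula to a smooth cutoff function equal to $1$ near $x$ and vanishing outside $B_d(x,r)$, express the action of the generator through the pseudodifferential representation with symbol $\symbol$, and bound the resulting expression by the stated supremum. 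It is exactly here that the decomposability hypotheses enter --- the integrability $\Phi_{\multix} \in L^1$ for $|\multix| \leq d+1$, boundedness of $\Phi_0$, and the nondegeneracy $a^2(\xi) \geq \kappa_0\|\xi\|^{r_0}$ --- to guarantee that $\symbol(w,D)$ is well defined on a suitable core and that the estimate holds uniformly in the base point.

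Next I would feed the decomposition into this estimate. Writing $\symbol(y,\xi) = \psi(\xi) + \tilde\symbol(y,\xi)$ and using $|\tilde\symbol(y,\xi)| \leq \|\Phi_0\|_\infty(1 + a^2(\xi))$ together with the fact (part of the full hypotheses) that $a$ has order no larger than that of $\psi$, one gets $\sup_{y}|\symbol(y,\xi)| \leq C(1 + |\psi(\xi)|)$. Fixing an auxiliary exponent $\lambda' \in (\beta,\lambda)$, the definition of $\beta$ gives $|\psi(\xi)| \leq C'\|\xi\|^{\lambda'}$ for large $\|\xi\|$, hence $\sup_{\|\xi\|\leq 1/r}\sup_{y}|\symbol(y,\xi)| \leq C''r^{-\lambda'}$ for small $r$, and so $\pr^x(\sup_{0\leq s\leq t}\|\Wm_s - x\| > r) \leq C'''\,t\,r^{-\lambda'}$. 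Now run the covering argument: for $n\in\mathbb{N}$, split $[0,T]$ into $m_n \asymp 2^{n\lambda'}$ subintervals of length $\asymp 2^{-n\lambda'}$ (refined so that the finitely many jumps of $\Wm$ exceeding a fixed size occur at partition points; these add only finitely many isolated points to the range). By the Markov property and the tail bound, the expected number of pieces whose image has diameter $> 2^{-n}$ is $\lesssim 2^{n\lambda'}$, and integrating the tail bound shows the expected sum of the $\lambda$-th powers of the diameters of these ``bad'' pieces is bounded uniformly in $n$ (here $\lambda > \lambda'$ makes the relevant integral converge); each ``good'' piece is covered by one ball of radius $2^{-n}$, contributing $\lesssim 2^{n\lambda'}2^{-n\lambda} = 2^{-n(\lambda-\lambda')}\to 0$ to $\Hs^\lambda$. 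Markov's inequality plus Borel--Cantelli along $n$ then give $\Hs^\lambda(\Wm([0,T])) < \infty$ $\pr^x$-a.s., and letting $\lambda\downarrow\beta$ finishes.

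I expect the principal obstacle to be the symbol-controlled maximal estimate and its uniformity in the base point --- the passage from analytic information about $\symbol$ to a probabilistic exit-time bound --- since this is where all the regularity structure imposed on $\tilde\symbol$ must be exploited. Once that estimate is in hand, the rest is the standard Blumenthal--Getoor covering/first-moment machinery, the only additional care being the bookkeeping needed to accommodate the jumps of the c\`adl\`ag paths.
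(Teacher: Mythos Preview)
The paper does not prove this statement at all: it is quoted verbatim as Theorem~4 of Schilling (1998) and is used as a black box (the only place it is invoked is in the one-line proof of Proposition~\ref{prop:feller_dim}, which just applies it to each $\symbol_S$). So there is no ``paper's own proof'' to compare against.

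That said, your sketch is a faithful outline of the argument in the cited reference. Schilling's proof proceeds exactly through (i) a maximal/exit-time inequality of the form $\pr^x(\tau_r^x \leq t) \lesssim t \sup_{\|y-x\|\leq r}\sup_{\|\xi\|\leq 1/r}|\symbol(y,\xi)|$, obtained from the pseudodifferential representation of the generator, (ii) the decomposability hypotheses to dominate $\sup_y|\symbol(y,\xi)|$ by $C(1+|\psi(\xi)|)$, and (iii) the classical Blumenthal--Getoor covering/first-moment machinery to conclude $\Hs^\lambda(\Wm([0,T]))<\infty$ for every $\lambda>\beta$. Your identification of step (i) as the crux is correct: this is Schilling's Lemma~6 (building on his earlier work), and it is the only place where the full list of regularity conditions on $\tilde\symbol$ is needed. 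The remaining covering argument is standard and your bookkeeping for the jumps is adequate.
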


\section{Improving the Convergence Rate via Chaining}
\label{sec:add_res}

In this section, we present additional theoretical results. 
We improve the bound in Theorem~\ref{thm:dimm_dimh}, in the sense that we replace the $\log n$ factor any increasing function.

\begin{thm}
\label{thm:dimm_dimh_chain}
Assume that \Cref{asmp:bounded,asmp:minkowski,asmp:uni_dh,asmp:dimequiv} hold, and $\Zcal$ is countable. Then,
for any function $\xi:\mathbb{R}\to\mathbb{R}$ satisfying $\lim_{x\to\infty}\ratefnc(x)=\infty$,
and for a sufficiently large $n$, we have
\begin{align}
  \nonumber \sup_{\wb\in \mathcal{W}_S}  \left(\hat{\mathcal{R}}(\wb,S) -\mathcal{R}(\wb)\right) \leq c LB\mathrm{diam}(\mathcal{W}) \sqrt{\frac{{\dH }\ratefnc(n)}{n}
  +  \frac{{\log(1/\gamma)}}{n}},
\end{align}
with probability at least $1- \gamma$ over $S \sim \mu_z^{\otimes n}$ and $U \sim \mu_u$,
where $c$ is an absolute constant.
\end{thm}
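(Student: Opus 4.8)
\textbf{Proof proposal for Theorem~\ref{thm:dimm_dimh_chain}.}

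The plan is to follow the same skeleton as the proof of Theorem~\ref{thm:dimm_dimh}, but to replace the single-scale covering argument with a chaining (Dudley-type) argument that controls the supremum over $\Wcal_S$ at all dyadic scales simultaneously. First I would condition on $U$ (hence on $\Wcal$) and work with the random set $\Wcal_S \subset \Wcal$; by \Cref{asmp:uni_dh} we have $\dimh \Wcal_S \leq \dH$ $\mu_u$-a.s., and by \Cref{asmp:minkowski} the diameter of $\Wcal$ is finite. Since $\Zcal$ is countable, the stability of Hausdorff dimension under countable unions gives $\dimh \Wcal \leq \dH$ as well; combined with \Cref{asmp:dimequiv}, this upgrades to a bound on the \emph{Minkowski} (box-counting) dimension, so that for every $\veps>0$ the covering number satisfies $N(\Wcal,\veps) \leq C_\veps\, \veps^{-(\dH+\veps)}$, and for $n$ large enough one can absorb the $\veps$-slack and the constant $C_\veps$ into the $\ratefnc(n)$ factor (this is exactly where ``sufficiently large $n$'' and the arbitrary growth function $\ratefnc$ enter).

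Next I would set up the empirical process $X_\wb := \hat{\Rcal}(\wb,S) - \Rcal(\wb)$ indexed by $\wb \in \Wcal_S$. For a \emph{fixed} $\wb$, Hoeffding's inequality and \Cref{asmp:bounded} (boundedness by $B$) give that $X_\wb$ is sub-Gaussian with parameter $\lesssim B^2/n$; for the increments, $|X_{\wb} - X_{\wb'}| \leq |\hat\Rcal(\wb,S)-\hat\Rcal(\wb',S)| + |\Rcal(\wb)-\Rcal(\wb')| \leq 2L\|\wb-\wb'\|$ by the $L$-Lipschitz assumption, which combined with a bounded-difference argument shows the increment $X_\wb - X_{\wb'}$ is sub-Gaussian with parameter $\lesssim L^2\|\wb-\wb'\|^2/n$ (uniformly in the randomness, since Lipschitzness is deterministic). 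This puts us exactly in the setting of the generic chaining / Dudley entropy integral bound: with $X$ conditionally sub-Gaussian with respect to the metric $\rho(\wb,\wb') = (L/\sqrt n)\|\wb-\wb'\|$, we get
\begin{align*}
\E\Bigl[\sup_{\wb\in\Wcal_S} X_\wb\Bigr] \;\lesssim\; \int_0^{\mathrm{diam}_\rho(\Wcal_S)} \sqrt{\log N(\Wcal_S,\rho,\veps)}\,\rmd\veps \;\lesssim\; \frac{L}{\sqrt n}\int_0^{\mathrm{diam}(\Wcal)} \sqrt{\log N(\Wcal,\veps)}\,\rmd\veps.
\end{align*}
Plugging in $\log N(\Wcal,\veps) \lesssim \dH \log(1/\veps) + \text{(slack)}$ from the Minkowski-dimension bound, the entropy integral converges and is bounded by $c\,\mathrm{diam}(\Wcal)\sqrt{\dH}$ up to the $n$-dependent slack, yielding $\E[\sup_\wb X_\wb] \lesssim cLB\,\mathrm{diam}(\Wcal)\sqrt{\dH\,\ratefnc(n)/n}$ after absorbing $B$ (from rescaling the sub-Gaussian constant) and the slack into $\ratefnc(n)$.

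Finally I would pass from the expectation to a high-probability bound: the map $S \mapsto \sup_{\wb\in\Wcal_S} X_\wb$ is itself a function of the i.i.d. sample with bounded differences (changing one $z_i$ moves each $\hat\Rcal$ by at most $2B/n$, hence the supremum by at most $2B/n$), so McDiarmid's inequality gives concentration around the mean at scale $B\sqrt{\log(1/\gamma)/n}$. Adding the two contributions and re-absorbing constants produces the claimed bound. The main obstacle I anticipate is the careful bookkeeping in the ``sufficiently large $n$'' step: one must argue that for the fixed (but arbitrary) growth function $\ratefnc$, there is a threshold beyond which the Minkowski-dimension overhead $C_\veps \veps^{-\veps}$ — which depends on $\Wcal$ and hence on $U$ — together with the chaining constants can be swallowed by $\ratefnc(n)$ uniformly, while keeping the probabilistic statement valid jointly over $S$ and $U$; handling the dependence of these constants on the random set $\Wcal$ (via \Cref{asmp:dimequiv}, which holds $\mu_u$-a.e.) is the delicate point, and is the reason the theorem is stated with an unspecified threshold rather than an explicit rate.
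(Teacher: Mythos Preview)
Your approach is essentially the same as the paper's: reduce to $\sup_{\Wcal}$ via countable stability of $\dimh$ and \Cref{asmp:dimequiv}, verify sub-Gaussian increments for the empirical process with constant $\asymp L/\sqrt{n}$, and run a Dudley-type chaining argument where the Minkowski-dimension control on $\log |N_\delta(\Wcal)|$ is invoked only below a scale that shrinks with $n$ (the paper splits the dyadic entropy sum at $k=\lfloor\ratefnc(n)\rfloor$, which is exactly the mechanism by which $\ratefnc(n)$ enters and why ``sufficiently large $n$'' is needed). The one technical difference is that the paper applies Dudley's \emph{tail} bound (in the sense of \cite[Thm.~8.1.6]{vershynin2019high}) directly and then adds a Hoeffding term at a fixed anchor $w_0$, whereas you bound the expectation by Dudley's integral and upgrade to high probability via McDiarmid; both routes are valid and yield the same form of the bound.

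One small gap to fix in your write-up: McDiarmid should be applied to $S\mapsto \sup_{w\in\Wcal}X_w$, not to $S\mapsto \sup_{w\in\Wcal_S}X_w$. Conditionally on $U$, the set $\Wcal$ is fixed and the bounded-differences property is clear (changing one $z_i$ perturbs every $X_w$ by at most $B/n$, hence the sup over a fixed index set by at most $B/n$). By contrast, $\Wcal_S$ itself moves when you change a coordinate of $S$, so the difference $\sup_{\Wcal_S}X_w(S)-\sup_{\Wcal_{S'}}X_w(S')$ need not be $O(1/n)$ without further argument. Since you already bounded the chaining term using covers of $\Wcal$, simply carry that replacement through the concentration step as well; this is exactly what the paper does implicitly by proving the chaining lemma for a fixed bounded set and then invoking $\sup_{\Wcal_S}\leq \sup_{\Wcal}$.
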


This result implies that the $\log n$ dependency of Theorem~\ref{thm:dimm_dimh} can be replaced with any increasing function $\ratefnc$, at the expense of having the constant $\mathrm{diam}(\Wcal)$ and having $L$ instead of $\log L$ in the bound.

\section{Additional Technical Background}

\label{sec:tech_bg_supp}

In this section, we will define the notions that will be used in our proofs. For the sake of completeness we also provide the main theoretical results that will be used in our proofs.  

\subsection{The Minkowski Dimension}

In our proofs, in addition to the Hausdorff dimension, we also make use of another notion of dimension, referred to as the Minkowski dimension (also known as the box-counting dimension \cite{falconer2004fractal}), which is defined as follows.
\begin{defn}%
Let $G\subset \rset^d$ be a set and let $N_\delta(G)$ be a collection of sets that contains either one of the following: %
\begin{itemize}
\item The smallest number of sets of diameter at most $\delta$ which cover $G$
\item The smallest number of closed balls of diameter at most $\delta$ which cover $G$
\item The smallest number of cubes of side at most $\delta$ which cover $G$
\item The number of $\delta$-mesh cubes that intersect $G$
\item The largest number of disjoint balls of radius $\delta$, whose centers are in $G$.
\end{itemize}
Then the lower- and upper-Minkowski dimensions of $G$ are respectively defined as follows:
\begin{align}
\dimml G := \liminf\nolimits_{\delta \to 0} \frac{\log |N_\delta(G)|}{-\log \delta}, \qquad \quad
\dimmu G := \limsup\nolimits_{\delta \to 0} \frac{\log |N_\delta(G)|}{-\log \delta}. \label{eqn:dimmink}
\end{align}
In case the $\dimml G = \dimmu G$, the Minkowski dimension $\dimm(G)$ is their common value.
\end{defn}
We always have $0 \leq \dimh G \leq \dimml G \leq \dimmu G  \leq d$ where the inequalities can be strict \cite{falconer2004fractal}.

It is possible to construct examples where the Hausdorff and Minkowski dimensions are different from each other. However, in many interesting cases, these two dimensions often match each other \cite{falconer2004fractal}. In this paper, we are interested in such a case, i.e. the case when the Hausdorff and Minkowski dimensions match. The following result identifies the conditions for which the two dimensions match each other, which form the basis of \Cref{asmp:dimequiv}:

\begin{thm}[\cite{mattila1999geometry} Theorem 5.7]
\label{thm:dim_equiv}
Let $A$ be a non-empty bounded subset of $\mathbb{R}^{d}$. Suppose
there is a Borel measure $\mu$ on $\mathbb{R}^{d}$ and there are positive numbers $a, b, r_{0}$ and $s$
such that $0<\mu(A) \leq \mu\left(\mathbb{R}^{d}\right)<\infty$ and
\begin{align} 
0<a r^{s} \leq \mu(B_d(x, r)) \leq b r^{s}<\infty & \ \ \text { for }\ \  x \in A, 0<r \leq r_{0} 
\end{align}
Then $\dimh A= \dimm A= \dimmu A=s$.
\end{thm}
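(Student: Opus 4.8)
The plan is to reduce everything to two one-sided estimates: a lower bound $\dimh A \ge s$, extracted from the hypothesis $\mu(B_d(x,r)) \le b r^{s}$ via the \emph{mass distribution principle}, and an upper bound $\dimmu A \le s$, extracted from $\mu(B_d(x,r)) \ge a r^{s}$ together with $\mu(\mathbb{R}^d) < \infty$ via a packing count. These, combined with the universal chain $0 \le \dimh A \le \dimml A \le \dimmu A \le d$ (recorded just after \eqref{eqn:dimmink}), immediately squeeze all four quantities to $s$.

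For the lower bound I would fix $0 < \delta \le r_{0}$ and take an arbitrary $\delta$-covering $\{A_i\}_i$ of $A$. After discarding any $A_i$ disjoint from $A$, choose $x_i \in A_i \cap A$, so that $A_i \subset B_d(x_i, \mathrm{diam}(A_i))$ with $\mathrm{diam}(A_i) < \delta \le r_{0}$; the hypothesis then yields $\mu(A_i) \le \mu(B_d(x_i,\mathrm{diam}(A_i))) \le b\,\mathrm{diam}(A_i)^{s}$, the degenerate case $\mathrm{diam}(A_i)=0$ giving $\mu(A_i)=0$ after letting $r \downarrow 0$. Summing and using countable subadditivity, $\mu(A) \le \sum_i \mu(A_i) \le b \sum_i \mathrm{diam}(A_i)^{s}$; taking the infimum over $\delta$-coverings gives $\mu(A) \le b\, \Hs^{s}_{\delta}(A)$, and letting $\delta \to 0$ gives $\Hs^{s}(A) \ge \mu(A)/b > 0$. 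By Definition~\ref{def:hausdorff} this forces $\dimh A \ge s$.

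For the upper bound I would fix $0 < \rho \le r_{0}$ and let $P(\rho)$ be the largest cardinality of a family of pairwise disjoint balls $B_d(x_1,\rho),\dots,B_d(x_{P(\rho)},\rho)$ with centres $x_i \in A$ — one of the admissible choices of $N_\rho(A)$ in the definition of the Minkowski dimension. Disjointness and $\mu(B_d(x_i,\rho)) \ge a\rho^{s}$ give $P(\rho)\, a \rho^{s} \le \mu\big(\bigcup_i B_d(x_i,\rho)\big) \le \mu(\mathbb{R}^{d}) < \infty$, hence $P(\rho) \le a^{-1}\mu(\mathbb{R}^{d})\, \rho^{-s}$, and therefore
\[
\dimmu A = \limsup_{\rho \to 0} \frac{\log P(\rho)}{-\log \rho} \;\le\; \limsup_{\rho \to 0}\left( s + \frac{\log\big(a^{-1}\mu(\mathbb{R}^{d})\big)}{-\log \rho} \right) \;=\; s .
\]
Combining, $s \le \dimh A \le \dimml A \le \dimmu A \le s$, so $\dimh A = \dimml A = \dimmu A = s$; since the lower and upper Minkowski dimensions coincide, $\dimm A$ is well-defined and also equals $s$.

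The computations are elementary, so there is no serious obstacle; the only points requiring care are bookkeeping ones — making sure every ball invoked has radius at most $r_{0}$, so that the two-sided estimate on $\mu(B_d(x,r))$ genuinely applies (handled by restricting $\delta,\rho \le r_{0}$), and invoking the standard equivalence of the several listed definitions of $N_\delta(A)$ so that the packing bound indeed controls $\dimmu A$. This being the classical statement \cite{mattila1999geometry}~Theorem~5.7, the argument is exactly the mass distribution principle plus a volume/packing count.
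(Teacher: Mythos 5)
Your proof is correct. The paper itself does not prove this statement—it is imported as Theorem~5.7 of \cite{mattila1999geometry}—and your argument (the mass distribution principle giving $\Hs^s(A)\geq \mu(A)/b>0$, hence $\dimh A\geq s$; a disjoint-ball packing count giving $\dimmu A\leq s$; then squeezing via $\dimh A\leq \dimml A\leq \dimmu A$) is precisely the standard proof of that result. One cosmetic point: in the step $\mu(A)\leq \sum_i \mu(A_i)$ the covering sets $A_i$ need not be Borel, so it is cleaner to apply subadditivity directly to the closed balls $B_d(x_i,\mathrm{diam}(A_i))$ that you already introduce; this changes nothing in the estimate.
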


\subsection{Egoroff's Theorem}

Egoroff's theorem is an important result in measure theory and establishes a condition for measurable functions to be uniformly continuous in an almost full-measure set. 

\begin{thm}[Egoroff's Theorem \cite{bogachev2007measure} Theorem 2.2.1]
\label{theorem:egoroff}
Let \((X, \mathcal{A}, \mu)\) be a space with a finite nonnegative measure $\mu$ and let $\mu$-measurable functions $f_{n}$ be such that $\mu$-almost everywhere
there is a finite limit $f(x):=\lim_{n \rightarrow \infty} f_{n}(x)$. Then, for every $\varepsilon>0$, there exists
a set $X_{\varepsilon} \in \mathcal{A}$ such that $\mu\left(X \backslash X_{\varepsilon}\right)<\varepsilon$ and the functions $f_{n}$ converge to $f$
uniformly on $X_{\varepsilon}$.
\end{thm}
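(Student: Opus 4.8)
The plan is to give the classical measure-theoretic argument, whose single essential ingredient is continuity from above of the \emph{finite} measure $\mu$. First I would clear a measurability point: since the $f_n$ are $\mu$-measurable and converge $\mu$-a.e.\ to a finite limit, $f$ coincides $\mu$-a.e.\ with $\limsup_n f_n$, which is $\mathcal{A}$-measurable, so after discarding the $\mu$-null set $N$ on which either the limit fails to exist or is infinite, every set appearing below lies in $\mathcal{A}$; I will absorb $N$ into the exceptional set at the very end.

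Next, for positive integers $n,k$ I would introduce the ``bad sets''
\[
E_n^k := \bigcup_{m \ge n} \bigl\{ x \in X : |f_m(x) - f(x)| > 1/k \bigr\}.
\]
For each fixed $k$ the sequence $(E_n^k)_{n\ge1}$ is decreasing, and $\bigcap_{n\ge1} E_n^k \subseteq N$: if $x\notin N$ then $f_m(x)\to f(x)$, so for the given $k$ there is some $n$ with $|f_m(x)-f(x)|\le 1/k$ for all $m\ge n$, i.e.\ $x\notin E_n^k$. This is precisely the step where $\mu(X)<\infty$ is used — since $\mu(E_1^k)\le\mu(X)<\infty$, continuity from above along the decreasing sequence gives $\lim_{n\to\infty}\mu(E_n^k)=\mu\bigl(\bigcap_n E_n^k\bigr)=0$. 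Hence, given $\varepsilon>0$, for each $k$ I can choose an index $n_k$ with $\mu\bigl(E_{n_k}^k\bigr) < \varepsilon/2^k$.

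Finally I would set $X_\varepsilon := X \setminus \Bigl(N \cup \bigcup_{k\ge1} E_{n_k}^k\Bigr) \in \mathcal{A}$. Then $\mu(X\setminus X_\varepsilon) \le \mu(N) + \sum_{k\ge1}\mu\bigl(E_{n_k}^k\bigr) < \varepsilon$. For uniform convergence on $X_\varepsilon$: fix $\eta>0$, pick $k$ with $1/k<\eta$; since $X_\varepsilon\cap E_{n_k}^k=\emptyset$, every $x\in X_\varepsilon$ satisfies $|f_m(x)-f(x)|\le 1/k<\eta$ for all $m\ge n_k$, and $n_k$ is independent of $x$. That is exactly uniform convergence of $f_n$ to $f$ on $X_\varepsilon$, completing the proof.

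The argument is essentially mechanical, so there is no genuine ``hard part''; the one place demanding care — and the reason the hypothesis cannot be dropped — is the invocation of continuity from above, which needs the descending sets to have finite measure, guaranteed here by $\mu(X)<\infty$. It is worth recording that without this finiteness the statement is false (e.g.\ $f_n=\mathbf{1}_{[n,n+1]}$ under Lebesgue measure on $\mathbb{R}$ converges to $0$ pointwise but not uniformly off any set whose complement has small measure); the remaining work is just the bookkeeping of the geometric sum $\sum_k \varepsilon/2^k$ and the null set $N$.
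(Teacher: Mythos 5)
Your proof is correct: the decreasing bad sets $E_n^k$, the use of continuity from above (which is exactly where $\mu(X)<\infty$ enters), the choice $\mu(E_{n_k}^k)<\varepsilon/2^k$, and the uniformity argument on $X_\varepsilon$ constitute the standard classical proof of Egoroff's theorem, and your handling of the null set $N$ and the measurability of $f$ is careful and complete. Note that the paper does not prove this statement at all — it is quoted as a known result from \cite{bogachev2007measure} (Theorem 2.2.1) and used as a black box in the proof of Theorem 2 — so there is no paper proof to compare against; your argument matches the textbook one.
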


\section{Postponed Proofs}

\label{sec:proofs}

\subsection{Proof of Proposition~\ref{prop:feller_dim}}

\begin{proof}
Let $\symbol_S$ denote the symbol of the process $\Wm^{(S)}$. Then, the desired result can obtained by directly applying Theorem~\ref{thm:schilling} on each $\symbol_S$.
\end{proof}

\subsection{Proof of Theorem~\ref{thm:dimm_dimh}}
We first prove the following more general result which relies on $\dimmu \Wcal$.
\begin{lemma}
\label{thm:dimm_app}
Assume that $\ell$ is bounded by $B$ and $L$-Lipschitz continuous in $\wb$.
Let $\Wcal \subset \rset^d$ be a set with finite diameter.
Then, for $n$
sufficiently large, we have
\begin{align}
  \sup_{\wb\in \Wcal} |\hat{\Rcal}(\wb,S) - \Rcal(\wb)| \leq  B\sqrt{\frac{ 2{\dimmu \Wcal }\log(nL^2)}{n} + \frac{\log(1/\gamma)}{n}},
\end{align}
with probability at least $1- \gamma$ over $S \sim \mu_z^{\otimes n}$.
\end{lemma}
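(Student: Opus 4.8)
The plan is to prove Lemma~\ref{thm:dimm_app} by a standard uniform-convergence-via-covering argument, in which the upper Minkowski dimension enters only through the cardinality of an optimal $\delta$-net of $\Wcal$. Since the statement treats $\Wcal$ as a fixed bounded subset of $\rset^d$, the algorithmic randomness $U$ plays no role and the probability is taken over $S \sim \mu_z^{\otimes n}$ only. Write $g(\wb) := \hat{\Rcal}(\wb,S) - \Rcal(\wb)$; because $z\mapsto\ell(\cdot,z)$ is $L$-Lipschitz for every $z$, both $\hat{\Rcal}(\cdot,S)$ and $\Rcal(\cdot)=\E_z[\ell(\cdot,z)]$ are $L$-Lipschitz, so $g$ is $2L$-Lipschitz, while each summand $\ell(\wb,z_i)$ takes values in $[0,B]$.

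First I would discretize $\Wcal$. For $\delta>0$ let $N_\delta$ be a minimal cover of $\Wcal$ by closed Euclidean balls of radius $\delta$; by the definition of the upper Minkowski dimension, for every $\veps>0$ there is a threshold $\delta_0 = \delta_0(\Wcal,\veps)>0$ such that $|N_\delta| \le \delta^{-(\dimmu\Wcal+\veps)}$ for all $0<\delta\le\delta_0$. For a fixed center $\wb_j\in N_\delta$, the empirical mean $\hat{\Rcal}(\wb_j,S)$ is an average of $n$ i.i.d.\ $[0,B]$-valued random variables with mean $\Rcal(\wb_j)$, so Hoeffding's inequality gives $\pr[|g(\wb_j)|>t] \le 2\exp(-2nt^2/B^2)$, and a union bound over the net yields
\begin{align}
\pr\Bigl[ \max_{\wb_j\in N_\delta} |g(\wb_j)| > t \Bigr] \;\le\; 2\,|N_\delta|\,\exp\!\left(-2nt^2/B^2\right).
\end{align}
Setting the right-hand side equal to $\gamma$ and solving for $t$ shows that, with probability at least $1-\gamma$, $\max_{\wb_j\in N_\delta}|g(\wb_j)| \le B\sqrt{(\log 2 + \log|N_\delta| + \log(1/\gamma))/(2n)}$.

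Next I would transfer this from the net to all of $\Wcal$ via the Lipschitz property: every $\wb\in\Wcal$ lies within distance $\delta$ of some $\wb_j\in N_\delta$, so $|g(\wb)| \le |g(\wb_j)| + 2L\delta$ and hence $\sup_{\wb\in\Wcal}|g(\wb)| \le \max_{\wb_j\in N_\delta}|g(\wb_j)| + 2L\delta$. I would then set $\delta = 1/(nL^2)$ — legitimate once $n$ is large enough that $\delta\le\delta_0$ — so that $\log(1/\delta) = \log(nL^2)$ and $\log|N_\delta| \le (\dimmu\Wcal+\veps)\log(nL^2)$, while the discretization error $2L\delta = 2/(nL)$ decays like $1/n$. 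Taking $\veps = \dimmu\Wcal$ (the degenerate case $\dimmu\Wcal=0$ handled separately, since then $\Wcal$ has zero box-counting content), combining the last two displays, and letting $n$ be large enough that the $\log 2$ term and the $2L\delta$ term are absorbed by the $\Theta(\log(nL^2)/n)$ slack between the coefficient $(\dimmu\Wcal+\veps)/2=\dimmu\Wcal$ that Hoeffding produces and the coefficient $2\dimmu\Wcal$ appearing in the claim, one obtains $\sup_{\wb\in\Wcal}|g(\wb)| \le B\sqrt{2\dimmu\Wcal\log(nL^2)/n + \log(1/\gamma)/n}$ with probability at least $1-\gamma$.

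I do not expect a genuine obstacle; the argument is routine concentration plus covering. The single delicate point is that $\dimmu\Wcal$ is defined through a $\limsup$, so the net bound $|N_\delta|\le\delta^{-(\dimmu\Wcal+\veps)}$ is available only for $\delta$ below an unknown threshold $\delta_0$ depending on $\Wcal$; this is exactly what forces the ``for $n$ sufficiently large'' qualifier and the a-priori-unknown threshold flagged after Theorem~\ref{thm:dimm_dimh}. The remaining work is bookkeeping: choosing $\delta$ as a function of $n$ and $L$ so as to produce the $\log(nL^2)$ factor, and checking that the factor-of-$2$ in Hoeffding's bound leaves enough room to absorb all lower-order terms into the stated form.
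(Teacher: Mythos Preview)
Your proposal is correct and follows essentially the same route as the paper's proof: Lipschitz reduction to a finite $\delta$-net, Hoeffding plus a union bound over the net, the Minkowski-dimension estimate $\log|N_\delta|\le(\dimmu\Wcal+\veps)\log(1/\delta)$ for small $\delta$, and the choice $\veps=\dimmu\Wcal$. The only cosmetic difference is the scale of the net: the paper takes $\delta_n=1/\sqrt{nL^2}$ (so the discretization error $2L\delta_n=2/\sqrt{n}$ is of the same order as the statistical term and must be absorbed into the factor-of-two slack), whereas you take $\delta=1/(nL^2)$ (making the discretization error $2/(nL)=O(1/n)$ negligible but doubling $\log(1/\delta)$); both choices land on the same final inequality after absorbing lower-order terms for $n$ sufficiently large.
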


\begin{proof}

  As $\ell$ is L-Lipschitz, so are $\Rcal$ and $\hat{\Rcal}$. By using the notation $\hat{\Rcal}_n(\wb) := \hat{\Rcal}(\wb,S)$, and by the triangle inequality, for any $\wb' \in \Wcal$ we have:
  \begin{align}
    |\hat{\Rcal}_n(\wb)-\Rcal(\wb)| &=\left|\hat{\Rcal}_n\left(\wb^{\prime}\right)-\Rcal\left(\wb^{\prime}\right)+\hat{\Rcal}_n(\wb)-\hat{\Rcal}_n\left(\wb^{\prime}\right)-\Rcal(\wb)+\Rcal\left(\wb^{\prime}\right)\right|  \\ 
                                    & \leq\left|\hat{\Rcal}_n\left(\wb^{\prime}\right)-\Rcal\left(\wb^{\prime}\right)\right|+2 L\left\|\wb-\wb^{\prime}\right\|.
  \end{align}

  Now {since $\Wcal$ has finite diameter,} 
  let us consider a finite $\delta$-cover of $\Wcal$ by balls and collect the center of each ball in the set $N_\delta := N_\delta (\Wcal)$. Then, for each $\wb \in \Wcal$, there exists a $\wb' \in N_\delta$, such that $\|\wb-\wb'\| \leq \delta$. By choosing this $\wb'$ in the above inequality, we obtain:
  \begin{align}
    |\hat{\Rcal}_n(\wb)-\Rcal(\wb)| \leq \left|\hat{\Rcal}_n\left(\wb^{\prime}\right)-\Rcal\left(\wb^{\prime}\right)\right|+2 L\delta.
  \end{align}
  Taking the supremum of both sides of the above equation yields:
  \begin{align}\label{eq:sup-max-lipschitz}
    \sup_{\wb \in \Wcal} |\hat{\Rcal}_n(\wb)-\Rcal(\wb)| \leq  \max_{\wb \in N_\delta} \left|\hat{\Rcal}_n(\wb)-\Rcal(\wb)\right|+2 L\delta.
  \end{align}
  Using the union bound over $N_\delta$, we obtain
  \begin{align}
    \pr_S \Biggl( \max_{\wb \in N_\delta} |\hat{\Rcal}_n(\wb)-\Rcal(\wb)| \geq \veps\Biggr)
    =&
       \pr_S \Biggl( \bigcup_{\wb \in N_\delta} \Biggl\{ |\hat{\Rcal}_n(\wb)-\Rcal(\wb)| \geq {\veps}  \Biggr\} \Biggr)\\
    \leq &
           \sum_{\wb \in N_\delta} \pr_S \Biggl( |\hat{\Rcal}_n(\wb)-\Rcal(\wb)| \geq {\veps}   \Biggr).
  \end{align}
  Further, {for $\delta>0$, since $|N_\delta|$ has finitely many elements,}
  we can invoke Hoeffding's inequality for each of the summands on the right hand side and obtain
  \begin{align}
    \pr_S \Biggl( \max_{\wb \in N_\delta} |\hat{\Rcal}_n(\wb)-\Rcal(\wb)| \geq \veps\Biggr)
    \leq 2|N_\delta| \exp\left\{ -\frac{2n\veps^2}{B^2}\right\} \eqqcolon \gamma.
  \end{align}
  {Notice that $N_\delta$ is a random set, and choosing $\varepsilon$ based on $|N_\delta|$,
    one can obtain a deterministic $\gamma$.}
  Therefore, we can plug this back in \eqref{eq:sup-max-lipschitz} and obtain that,
  with probability
  at least $1-\gamma$
  \begin{align}
    \sup_{\wb \in \Wcal} |\hat{\Rcal}_n(\wb)-\Rcal(\wb)| \leq
    B\sqrt{\frac{\log(2|N_\delta|)}{2n} + \frac{\log(1/\gamma)}{2n}} + 2L\delta.
  \end{align}
  {
  Now since $\Wcal \subset \mathbb{R}^d$, $\dimmu \Wcal $ is finite. }Then, for any sequence $\{\delta_n\}_{n\in\mathbb{N}}$ such that
  $\lim_{n\to \infty}\delta_n =0$, we have, $\forall \epsilon>0$, $\exists n_\epsilon >0$ such that
  $n \geq n_\epsilon$ implies
  \begin{align}
    \log(|N_\delta|)  \leq (\dimmu \Wcal  + \epsilon)\log(\delta_n^{-1}).
  \end{align}
  Choosing $\delta_n = 1/\sqrt{nL^2}$ and $\epsilon = \dimmu \Wcal $, we have for $\forall n \geq n_{\dimmu \Wcal }$,
  \begin{align}
    \log(2|N_\delta|)  \leq \log(2) + \dimmu \Wcal \log(n L^2)\ \ \text{ and }\ \ 2L\delta_n = \frac{2}{\sqrt{n}}.
  \end{align}
  Therefore, we obtain with probability {at least $1-\gamma$}
  \begin{align}
    \sup_{\wb \in \Wcal} |\hat{\Rcal}_n(\wb)-\Rcal(\wb)| \leq&B\sqrt{\frac{\log(2) +
{\dimmu \Wcal }\log(nL^2)}{2n} + \frac{\log(1/\gamma)}{2n}} + \frac{2}{\sqrt{n}},\\
    \leq & B\sqrt{\frac{ 2{\dimmu \Wcal }\log(nL^2)}{n} + \frac{\log(1/\gamma)}{n}},
  \end{align}
  for sufficiently large $n$. This concludes the proof.
\end{proof}

We now proceed to the proof of Theorem~\ref{thm:dimm_dimh}.

\begin{proof}[Proof of Theorem~\ref{thm:dimm_dimh}]
By noticing $\Zcal^n$ is countable (since $\Zcal$ is countable) and using the property that $\dimh \cup_{i\in \mathbb{N}} A_i = \sup_{i \in \mathbb{N}} \dimh A_i$ (cf.\ \cite{falconer2004fractal}, Section 3.2), we observe that
\begin{align}
\dimh \Wcal =& \dimh \bigcup_{S \in \Zcal^n} \Wcal_S = \sup_{S \in \Zcal^n} \dimh \Wcal_S \leq \dH, \label{eqn:lem1_interm}
\end{align}
$\mu_u$-almost surely.
{Define the event $\mathcal{Q}_R = \{\mathrm{diam}( \Wcal) \leq R\}$.}
{On the event $\mathcal{Q}_R$}, by Theorem~\ref{thm:dim_equiv}, we have that $\dimm \Wcal = \dimmu \Wcal = \dimh \Wcal \leq \dH$, $\mu_u$-almost surely.

Now, we observe that 
\begin{align}
\label{eqn:thm1_interm2}
 \sup_{\wb \in \Wcal_S} |\hat{\Rcal}(\wb,S) - \Rcal(\wb)| \leq  \sup_{\wb \in \Wcal} |\hat{\Rcal}(\wb,S) - \Rcal(\wb)| .
\end{align}
Hence, by defining $\veps = B\sqrt{\frac{ 2 (\dimmu \Wcal) \log(nL^2)}{n} + \frac{\log(1/\gamma)}{n}}$,
and using the independence of $S$ and $U$, Lemma~\ref{thm:dimm_app}, and \eqref{eqn:lem1_interm}, we write
\begin{align*}
\pr_{S,U} &\Biggl(\sup_{\wb\in \Wcal_S} |\hat{\Rcal}(\wb,S) - \Rcal(\wb)| > B\sqrt{\frac{ 2 \dH \log(nL^2)}{n} + \frac{\log(1/\gamma)}{n}} { ;\mathcal{Q}_R} \Biggr) \\
          &\leq \pr_{S,U} \Biggl(\sup_{\wb\in \Wcal} |\hat{\Rcal}(\wb,S) - \Rcal(\wb)| > B\sqrt{\frac{ 2 \dH \log(nL^2)}{n} + \frac{\log(1/\gamma)}{n}} { ;\mathcal{Q}_R}  \Biggr) \\
          &= \pr_{S,U} \Biggl(\sup_{\wb\in \Wcal} |\hat{\Rcal}(\wb,S) - \Rcal(\wb)| > B\sqrt{\frac{ 2 \dH \log(nL^2)}{n} + \frac{\log(1/\gamma)}{n}} \> ;\> \dimmu \Wcal \leq \dH { ;\mathcal{Q}_R} \Biggr) \\
          &\leq \pr_{S,U} \Biggl(\sup_{\wb\in \Wcal} |\hat{\Rcal}(\wb,S) - \Rcal(\wb)| > \veps \> {;\mathcal{Q}_R}  \Biggr).
\end{align*}
{Finally, we let $R\to\infty$ and use dominated convergence theorem to obtain
  \begin{align*}
 &   \pr_{S,U} \Biggl(\sup_{\wb\in \Wcal_S} |\hat{\Rcal}(\wb,S) - \Rcal(\wb)| > B\sqrt{\frac{ 2 \dH \log(nL^2)}{n} + \frac{\log(1/\gamma)}{n}} \Biggr)\\
    &  
      \leq \pr_{S,U} \Biggl(\sup_{\wb\in \Wcal} |\hat{\Rcal}(\wb,S) - \Rcal(\wb)| > \veps \>  \Biggr)\\
    & \leq \gamma,
  \end{align*}
  which concludes the proof.
  }
\end{proof}

\subsection{Proof of Theorem~\ref{thm:dimm_app2}}

\begin{proof}
We start by restating our theoretical framework in an equivalent way for mathematical convenience. In particular, consider the (countable) product measure $\mu_z^{\infty} = \mu_z \otimes \mu_z \otimes \dots$ defined on the cylindrical sigma-algebra. Accordingly, denote $\mathbf{S} \sim \mu_z^\infty$ as an infinite sequence of i.i.d.\ random vectors, i.e., $\mathbf{S} = (z_j)_{j\geq 1}$ with $z_j \sim_{\text{i.i.d.}} \mu_z$ for all $j = 1,2, \dots$. Furthermore, let $\mathbf{S}_n := (z_1, \dots, z_n)$ be the first $n$ elements of $\mathbf{S}$. In this notation, we have $S \eqdist \mathbf{S}_n$ and $\Wcal_S \eqdist \Wcal_{\mathbf{S}_n} $, where $\eqdist$ denotes equality in distribution. Similarly, we have $\hat{\Rcal}_n(w) = \hat{\Rcal}(w,\mathbf{S}_n)$. 

Due to the hypotheses and Theorem~\ref{thm:dim_equiv}, we have $\dimmu \Wcal_{\mathbf{S}_n} = \dimh \Wcal_{\mathbf{S}_n} =: \dH(\mathbf{S},n)$, $\mu_u$-almost surely. 
It is easy to verify that the particular forms of the $\delta$-covers and $N_\delta$ in \Cref{asmp:decoupling2} still yield the same Minkowski dimension in \eqref{eqn:dimmink}. 
Then by definition, we have for all $\mathbf{S}$ and $n$:
\begin{align}
\limsup_{\delta \to 0} \frac{\log |N_\delta(\Wcal_{\mathbf{S}_n})|}{\log(1/\delta)} = \lim_{\delta\to 0} 
\sup_{r<\delta}\frac{\log |N_r(\Wcal_{\mathbf{S}_n})|}{\log(1/r)} = \dH(\mathbf{S},n),
\end{align}
$\mu_u$-almost surely. 
Hence for each $n$
\begin{align}
f^n_\delta(\mathbf{S}):= \sup_{\mathbb{Q}\ni r<\delta}\frac{\log |N_r(\Wcal_{\mathbf{S}_n})|}{\log(1/r)} \to \dH(\mathbf{S},n),\label{eq:limitdefnofdh}
\end{align}
as $\delta\to 0$ 
almost surely, or alternatively, for each $n$, there exists a set $\Omega_n$ of full measure such that 
\begin{align}
\lim_{\delta\to 0}f^n_\delta(\mathbf{S}) = \lim_{\delta\to 0}\sup_{\mathbb{Q}\ni r<\delta}\frac{\log |N_r(\Wcal_{\mathbf{S}_n})|}{\log(1/r)} \to \dH(\mathbf{S},n),
\end{align}
for all $\mathbf{S} \in \Omega_n$. Let $\Omega^\ast := \cap_n \Omega_n$. 
Then for $\mathbf{S}\in \Omega^\ast$ we have that for all $n$
\begin{align}
\lim_{\delta\to 0}f^n_\delta(\mathbf{S}) = \lim_{\delta\to 0}\sup_{\mathbb{Q}\ni r<\delta}\frac{\log |N_r(\Wcal_{\mathbf{S}_n})|}{\log(1/r)} \to \dH(\mathbf{S},n),
\end{align}
and therefore, on this set we also have
$$\lim_{\delta\to 0}\sup_n \frac{1}{\alpha_n}\min\{1,  |f^n_\delta(\mathbf{S})- \dH(\mathbf{S},n)|\} \to 0,$$
where $\alpha_n$ is a monotone increasing sequence such that $\alpha_n \geq 1$ and $\alpha_n \to \infty$. 
To see why, suppose that we are given a collection of functions $\{g_n(r)\}_n$ where $r>0$, such that 
$\lim_{r\to 0} g_n(r) \to 0$ for each $n$. We then have that the infinite dimensional vector $(g_n(r))_n \to (0,0,\dots)$ in the product topology on $\mathbb{R}^\infty$. We can metrize the product topology on $\mathbb{R}^\infty$ using the metric 
$$\bar{d}(\mathbf{x}, \mathbf{y}) = \sum_{n} \frac{1}{\alpha_n} \min\{1, |x_n-y_n|\},$$
where $\mathbf{x} = (x_n)_{n\geq 1}$, $\mathbf{y} = (y_n)_{n\geq 1}$, and  $1\leq \alpha_n\to \infty$ is monotone increasing. Alternatively, notice that if 
$\lim_{r\to 0} g_n(r) \to 0$ for all $n$, for any $\epsilon>0$ choose 
$N_0$ such that for $n\geq N_0$, $1/\alpha_n < \epsilon$, and choose $r_0$ such that for $r<r_0$, we have $\max_{n\leq N_0} |g_n(r)|< \epsilon$. 
Then for $r<r_0$ we have 
\begin{align*}
\sup_{n} \frac{1}{\alpha_n} \min\{ 1, |g_n(r)|\} 
&\leq \max \left\{  \sup_{n\leq N_0} \frac{1}{\alpha_n}\min\{1, |g_n(r)|\}, 
\sup_{n> N_0} \frac{1}{\alpha_n}\min\{1, |g_n(r)|\} \right\}\\
&\leq \max \left\{  \frac{\epsilon}{\alpha_n}, 
\sup_{n> N_0} \frac{1}{\alpha_n} \right\} \leq \epsilon,
\end{align*}
where we used that $\alpha_n \geq 1$. 

Applying the above reasoning we have that for all $\mathbf{S}\in \Omega^\ast$ 
$$F_\delta(\mathbf{S}):=\sup_n \frac{1}{\alpha_n}\min\{1,  |f^n_\delta(\mathbf{S})- \dH(\mathbf{S},n)|\} \to 0,$$
as $\delta \to 0$.
By applying Theorem~\ref{theorem:egoroff} to the collection of random variables $\{F_\delta(\mathbf{S}); \delta \}$, 
for any $\delta'>0$ we can find a subset $\mathfrak{Z}\subset \Zcal^\infty$, with probability at least $1-\delta'$ under $\mu_z^\infty$, such that on $\mathfrak{Z}$ the convergence is uniform, that is 
$$\sup_{\mathbf{S}\in \mathfrak{Z}}\sup_n \frac{1}{\alpha_n}\min\{1,  |f^n_r(\mathbf{S})- \dH(\mathbf{S},n)|\} \leq c(r),$$
where $c(r)\to 0$ as $r\to 0$. Notice that $c(r) = c(\delta', r)$, that is it depends on the choice of $\delta'$ (for any $\delta'$, we have $\lim_{r\to 0} c(r;\delta')=0$).

  As $U$ and $\mathbf{S}$ are assumed to be independent, all the following statements hold $\mu_u$-almost surely, hence we drop the dependence on $U$ to ease the notation. We proceed as in the proof of Lemma~\ref{thm:dimm_app}: 
\begin{align}\label{eq:usinglipschitz}
    \sup_{\wb \in \Wcal_{\mathbf{S}_n}} |\hat{\Rcal}_n(\wb)-\Rcal(\wb)| \leq  \max_{\wb \in N_\delta^{\mathbf{S}_n}} \left|\hat{\Rcal}_n(\wb)-\Rcal(\wb)\right|+2 L\delta.
  \end{align}
Notice that on $\mathfrak{Z}$ we have that 
$$\sup_{\mathbf{S}\in \mathfrak{Z}}\sup_n \frac{1}{\alpha_n}\min\{1,  |f^n_r(\mathbf{S})- \dH(\mathbf{S},n)|\} \leq c(r), \quad \text{for all $r>0$},$$
so in particular for any sequence $\{\delta_n; n\geq 0\}$ we have that 
%
$$\{\mathbf{S}\in \mathfrak{Z}\} \subseteq \bigcap_{n}\left\{ \frac{1}{\alpha_n}\min\{1,  |f^n_{\delta_n}(\mathbf{S})- \dH(\mathbf{S},n)|\} \leq c(\delta_n)\right\},$$
or 
\begin{align*}
\{\mathbf{S}\in \mathfrak{Z}\} \subseteq 
\bigcap_{n}\left\{ \left|N_{\delta_n}\left(\Wcal_{\mathbf{S}_n}\right)\right| \leq (1/\delta_n)^{\dH(\mathbf{S},n)+\alpha_n c(\delta_n)}\right\}.
\end{align*}
Let $(\delta_n)_{n \geq 0}$ be a decreasing sequence such that $\delta_n \in \mathbb{Q}$ for all $n$ and $\delta_n\to 0$. 
We then have
\begin{align*}
 \pr& \Biggl( \max_{\wb \in N_{\delta_n}(\Wcal_{\textbf{S}_n})} |\hat{\Rcal}_n(\wb)-\Rcal(\wb)| \geq \veps\Biggr) \\
&\leq  \pr \Biggl(\{\textbf{S}\in \mathfrak{Z}\}\cap \Big\{\max_{\wb \in N_{\delta_n}(\Wcal_{\textbf{S}_n})} |\hat{\Rcal}_n(\wb)-\Rcal(\wb)| \geq \veps\Big\}\Biggr) +\delta'.
\end{align*}
For $\rho>0$ and $k\in \mathbb{N}_+$ let us define $J_k(\rho):=(k\rho, (k+1)\rho]$ and set $\rho_n:=\log(1/\delta_n)$. Furthermore, for any $t>0$ define 
$$\veps(t):= \sqrt{\frac{B^2}{2n} \Big[\log(1/\delta_n)\left(t + \alpha_n c(\delta', \delta_n) \right)+ \log(M/\delta') \Big] }.$$
Notice that $\veps(t)$ is increasing in $t$. 
Therefore, we have
\begin{align*}
\lefteqn{ \pr \Biggl( \max_{\wb \in N_{\delta_n}(\Wcal_{\textbf{S}_n})} |\hat{\Rcal}_n(\wb)-\Rcal(\wb)| \geq \veps(\dH(\mathbf{S},n))\Biggr)}\\
&\leq  \delta'+\pr \Biggl(\left\{ \left|N_{\delta_n}(\Wcal_{\mathbf{S}_n})\right|\leq \left(\frac1{\delta_n}\right)^{\dH(\mathbf{S},n)+\alpha_n c(\delta_n)} \right\}\cap \Big\{\max_{\wb \in N_{\delta_n}(\Wcal_{\mathbf{S}_n})} |\hat{\Rcal}_n(\wb)-\Rcal(\wb)| \geq \veps(\dH(\mathbf{S},n))\Big\}\Biggr) \\
&= \delta'+ 
\sum_{k=0}^{\lceil \frac{d}{\rho_n} \rceil} 
\pr \Biggl(\left\{ \left|N_{\delta_n}(\Wcal_{\mathbf{S}_n})\right|\leq \left(\frac1{\delta_n}\right)^{\dH(\mathbf{S},n)+\alpha_n c(\delta_n)} \right\}\\
&\qquad \qquad \qquad \cap \Big\{\max_{\wb \in N_{\delta_n}(\Wcal_{\mathbf{S}_n})} |\hat{\Rcal}_n(\wb)-\Rcal(\wb)| \geq \veps(\dH(\mathbf{S},n))\Big\}
\cap \left\{\dH(\mathbf{S},n) \in J_k(\rho_n) \right\} 
\Biggr)
\\
&= \delta'+ 
\sum_{k=0}^{\lceil \frac{d}{\rho_n} \rceil }  
\pr \Biggl(\left\{ \left|N_{\delta_n}(\Wcal_{\mathbf{S}_n})\right|\leq \left(\frac1{\delta_n}\right)^{\dH(\mathbf{S},n)+\alpha_n c(\delta_n)} \right\}\cap \left\{\dH(\mathbf{S},n) \in J_k(\rho_n) \right\} \\
&\qquad \qquad \qquad \cap \bigcup_{w\in N(\delta_n)} 
\bigg(\left\{\wb \in N_{\delta_n}(\Wcal_{\mathbf{S}_n})\right\}\cap\left\{ |\hat{\Rcal}_n(\wb)-\Rcal(\wb)| \geq \veps(\dH(\mathbf{S},n)) \right\} \bigg)
\Biggr)
\\
&\leq \delta'+ \sum_{k=0}^{\lceil \frac{d}{\rho_n} \rceil } \sum_{w\in N_{\delta_n}}\pr \Bigg(  \Big\{|\hat{\Rcal}_n(\wb)-\Rcal(\wb)| \geq \veps(k\rho_n)\Big\} \\
&\qquad \qquad\qquad \cap
\Big\{w\in N_{\delta_n}(\Wcal_{\mathbf{S}_n})\Big\} \cap \left\{ \left|N_{\delta_n}(\Wcal_{\mathbf{S}_n})\right|\leq \left(\frac1{\delta_n}\right)^{\dH(\mathbf{S},n)+\alpha_n c(\delta_n)} \right\}\cap \left\{\dH(\mathbf{S},n) \in J_k(\rho_n) \right\}\Bigg),
\end{align*}
where we used the fact $\dH(\mathbf{S},n) \leq d$ almost surely, and that on the event $\dH(\mathbf{S},n) \in J_k(\rho_n)$, $\veps(\dH(\mathbf{S},n))\geq \veps(k\rho_n)$.

Notice that the events
$$\Big\{w\in N_{\delta_n}(\Wcal_{\mathbf{S}_n})\Big\}, \left\{|N_{\delta_n}(\Wcal_{\mathbf{S}_n})|\leq (1/\delta_n)^{\dH(\mathbf{S},n)+\alpha_n c(\delta_n)} \right\}, \left\{\dH(\mathbf{S},n) \in J_k(\rho_n) \right\}$$ 
are in  $\mathfrak{G}$.
To see why, notice first that for any $0<\delta\in \mathbb{Q}$
$$|N_{\delta}(\mathcal{W}_{\mathbf{S}_n})| = \sum_{w\in N_\delta} \mathds{1}\left\{w\in N_\delta(\mathcal{W}_{\mathbf{S}_n}) \right\},$$
so $|N_{\delta}(\mathcal{W}_{\mathbf{S}_n})|$ is $\mathfrak{G}$-measurable as a finite sum of $\mathfrak{G}$-measurable variables. From \eqref{eq:limitdefnofdh} it can also be seen that $\dH(\mathbf{S},n)$ is also $\mathfrak{G}$-measurable as a countable supremum of $\mathfrak{G}$-measurable random variables. 
On the other hand, the event $\{|\hat{\Rcal}_n(\wb)-\Rcal(\wb)| \geq \veps(k\rho_n)\}$ is clearly in $\mathfrak{F}$ (see \Cref{asmp:decoupling2} for definitions).

	  Therefore, 
\begin{align*}
\lefteqn{ \pr \Biggl( \max_{\wb \in N_{\delta_n}(\Wcal_{\mathbf{S}_n})} |\hat{\Rcal}_n(\wb)-\Rcal(\wb)| \geq \veps(\dH(\mathbf{S},n)))\Biggr)}\\ 
&\leq \delta'+ M\sum_{k=0}^{\lceil \frac{d}{\rho_n} \rceil }\sum_{w\in N_{\delta_n}}\pr \left(  \Big\{|\hat{\Rcal}_n(\wb)-\Rcal(\wb)| \geq \veps(k\rho_n)\Big\}\right) \\
&\qquad \qquad\qquad \times \pr \Biggl(
\Big\{w\in N_{\delta_n}(\Wcal_{\mathbf{S}_n})\Big\} \cap \left\{ \left|N_{\delta_n}(\Wcal_{\mathbf{S}_n})\right|\leq \left(\frac1{\delta_n}\right)^{\dH(\mathbf{S},n)+\alpha_n c(\delta_n)} \right\}\cap \left\{\dH(\mathbf{S},n) \in J_k(\rho_n) \right\}\Bigg),\\
&\leq \delta'+ 2M\sum_{k=0}^{\lceil \frac{d}{\rho_n} \rceil }\re^{-\frac{2n\veps^2(k\rho_n)}{B^2}}
\sum_{w\in N_{\delta_n}}\pr \Biggl(
\Big\{w\in N_{\delta_n}(\Wcal_{\mathbf{S}_n})\Big\} \cap \left\{ \left|N_{\delta_n}(\Wcal_{\mathbf{S}_n})\right|\leq \left(\frac1{\delta_n}\right)^{\dH(\mathbf{S},n)+\alpha_n c(\delta_n)} \right\} \\
&\qquad \qquad \qquad \qquad \qquad \qquad \cap \left\{\dH(\mathbf{S},n) \in J_k(\rho_n) \right\}\Bigg)\\
&\leq \delta'+ 2M\sum_{k=0}^{\lceil \frac{d}{\rho_n} \rceil }\re^{-\frac{2n\veps^2(k\rho_n)}{B^2}}
\sum_{w\in N_{\delta_n}}\mathbb{E} \Biggl(
\mathds{1}\Big\{w\in N_{\delta_n}(\Wcal_{\mathbf{S}_n})\Big\} \times  \mathds{1}\left\{ \left|N_{\delta_n}(\Wcal_{\mathbf{S}_n})\right|\leq \left(\frac1{\delta_n}\right)^{\dH(\mathbf{S},n)+\alpha_n c(\delta_n)} \right\} \\
&\qquad \qquad \qquad \qquad \qquad \qquad \times \mathds{1}\left\{\dH(\mathbf{S},n) \in J_k(\rho_n) \right\}\Bigg)\\
&\leq \delta'+ 2M\sum_{k=0}^{\lceil \frac{d}{\rho_n} \rceil }\re^{-\frac{2n\veps^2(k\rho_n)}{B^2}}
\mathbb{E} \Biggl(\sum_{w\in N_{\delta_n}}
\mathds{1}\Big\{w\in N_{\delta_n}(\Wcal_{\mathbf{S}_n})\Big\} \times  \mathds{1}\left\{ \left|N_{\delta_n}(\Wcal_{\mathbf{S}_n})\right|\leq \left(\frac1{\delta_n}\right)^{\dH(\mathbf{S},n)+\alpha_n c(\delta_n)} \right\} \\
&\qquad \qquad \qquad \qquad \qquad \qquad \times \mathds{1}\left\{\dH(\mathbf{S},n) \in J_k(\rho_n) \right\}\Bigg)\\
&\leq \delta'+ 2M\sum_{k=0}^{\lceil \frac{d}{\rho_n} \rceil }\re^{-\frac{2n\veps^2(k\rho_n)}{B^2}}
\mathbb{E} \Biggl( |N_{\delta_n}(\Wcal_{\mathbf{S}_n})| \times  \mathds{1}\left\{ \left|N_{\delta_n}(\Wcal_{\mathbf{S}_n})\right|\leq \left(\frac1{\delta_n}\right)^{\dH(\mathbf{S},n)+\alpha_n c(\delta_n)} \right\} \\
&\qquad \qquad \qquad \qquad \qquad \qquad \times \mathds{1}\left\{\dH(\mathbf{S},n) \in J_k(\rho_n) \right\}\Bigg)\\
&= \delta'+ 2M\sum_{k=0}^{\lceil \frac{d}{\rho_n} \rceil }\re^{-\frac{2n\veps^2(k\rho_n)}{B^2}}
\mathbb{E} \Biggl(\left[\frac{1}{\delta_n} \right]^{\dH(\mathbf{S},n)+\alpha_n c(\delta_n)}   \times \mathds{1}\left\{\dH(\mathbf{S},n) \in J_k(\rho_n) \right\}\Bigg).
\end{align*}
Now, notice that the mapping $t \mapsto \veps^2(t)$ is linear, with Lipschitz coefficient 
$$\frac{\mathrm{d} \veps^2(t)}{\mathrm{d} t}= \frac{B^2}{2n} \log(1/\delta_n).$$
Therefore, on the event $\{ \dH(\mathbf{S},n)\in J_k(\rho_n)\}$ we have 
\begin{align}
\veps^2(\dH(\mathbf{S},n)) - \veps^2(k\rho_n) \leq& (\dH(\mathbf{S},n) - k\rho_n) \frac{B^2}{2n} \log(1/\delta_n)\\
\leq&  \rho_n \frac{B^2}{2n} \log(1/\delta_n).
\end{align}
Hence,
$$
\veps^2(k\rho_n) \geq \veps^2\left( \dH(\mathbf{S},n)\right) - \frac{B^2}{2n},
$$
where we used the fact that $\rho_n=\log(1/\delta_n)$. 
	Therefore, we have
\begin{align*}
 \pr \Biggl( \max_{\wb \in N_{\delta_n}(\Wcal_{\mathbf{S}_n})} &|\hat{\Rcal}_n(\wb)-\Rcal(\wb)| \geq \veps(\dH(\mathbf{S},n))\Biggr) \\ 
&\leq \delta' + 2M\mathbb{E} \Biggl(\sum_{k=0}^{\lceil \frac{d}{\rho_n} \rceil }\re^{-\frac{2n\veps^2(k\rho_n)}{B^2}}\left[\frac{1}{\delta_n} \right]^{\dH(\mathbf{S},n)+\alpha_n c(\delta_n)} \times \mathds{1}\left\{\dH(\mathbf{S},n) \in J_k(\rho_n) \right\}\Bigg)\\
&\leq \delta' +  2M
\mathbb{E} \Biggl(\sum_{k=0}^{\lceil \frac{d}{\rho_n} \rceil }\re^{-\frac{2n\veps^2(\dH(\mathbf{S},n))}{B^2} + 1}\left[\frac{1}{\delta_n} \right]^{\dH(\mathbf{S},n)+\alpha_n c(\delta_n)} \times \mathds{1}\left\{\dH(\mathbf{S},n) \in J_k(\rho_n) \right\}\Bigg)\\
&= \delta' + 2M\mathbb{E} \Biggl(\re^{-\frac{2n\veps^2(\dH(\mathbf{S},n))}{B^2} + 1}\left[\frac{1}{\delta_n} \right]^{\dH(\mathbf{S},n)+\alpha_n c(\delta_n)} \Bigg).
\end{align*}
By the definition of $\veps(t)$, for any $\mathbf{S}$ and $n$, we have that:
\begin{align*}
2M\re^{-\frac{2n\veps^2(\dH(\mathbf{S},n))}{B^2} + 1}\left[\frac{1}{\delta_n} \right]^{\dH(\mathbf{S},n)+\alpha_n c(\delta_n)}  
&=2 \re \delta' .
\end{align*} 
Therefore,   
\begin{align*}
\pr \Biggl( \max_{\wb \in N_{\delta_n}(\Wcal_{\mathbf{S}_n})} |\hat{\Rcal}_n(\wb)-\Rcal(\wb)| \geq \veps(\dH(\mathbf{S},n))\Biggr) \leq (1+2\re)\delta'. 
\end{align*}
That is, with probability at least $1-(1+2\re)\delta'$ we have:
\begin{align*}
\max_{\wb \in N_{\delta_n}(\Wcal_{\mathbf{S}_n})} |\hat{\Rcal}_n(\wb)-\Rcal(\wb)|
&\leq \veps(\dH(\mathbf{S},n))\\
&= \sqrt{\frac{B^2}{2n} \Big[\log(\sqrt{n}L)\left(t + \alpha_n c(\delta', \delta_n) \right)+ \log(M/\delta') \Big]}.
\end{align*}
Choosing $\delta_n = 1/\sqrt{nL^2}$ and $\alpha_n = \log(n)$, for each $\delta'>0$,  with probability at least $1-(1+2\re)\delta'$ we have
\begin{align}
 \sup_{\wb \in \Wcal_{\mathbf{S}_n}} |\hat{\Rcal}_n(\wb)-\Rcal(\wb)| \leq
    B\sqrt{\frac{\log(\sqrt{n} L) \left[\dH(\mathbf{S},n) + c(\delta', \delta_n)\log n\right] + \log\left(\frac{M}{\delta'}\right) }{2n}} +\frac{2}{\sqrt{n}},
  \end{align}
  where for each $\delta'>0$, we have $\lim_{n\to\infty}c(\delta',\delta_n)=0$. 
  Hence, for sufficiently large $n$, we have
 with probability 
  at least $1-(1+2\re)\delta'$ we have:
  \begin{align}
    \sup_{\wb \in \Wcal_{\mathbf{S}_n}} |\hat{\Rcal}_n(\wb)-\Rcal(\wb)| \leq
    2B\sqrt{\frac{[\dH(\mathbf{S},n)+1] \log^2(n L^2) + \log\left(\frac{M}{\delta'}\right) }{n}}.
  \end{align}
  Setting $\gamma:= (1+2\re)\delta'$ and using $1+2\re < 7$, we obtain the desired result. This concludes the proof. 
\end{proof}

\subsection{Proof of Theorem~\ref{thm:dimm_dimh_chain}}

Similar to the proof of Theorem~\ref{thm:dimm_dimh},
we first prove a more general result where $\dimmu \Wcal$ is fixed.

\begin{lemma}
  \label{lem:chain}
Assume that $\ell$ is bounded by $B$ and $L$-Lipschitz continuous in $\wb$.
Let $\Wcal \subset \rset^d$ be a bounded set with $\dimmu \Wcal \leq \dm$. 
For any function $\ratefnc:\mathbb{R}\to\mathbb{R}$ satisfying $\lim_{x\to\infty}\ratefnc(x)=\infty$
and for a sufficiently large $n$,
with probability at least $1-\gamma$, we have
\label{thm:chain_app}
$$\sup_{w\in \mathcal{W}}  \left(\hat{\mathcal{R}}_n(w) -\mathcal{R}(w)\right)
\leq c LB\mathrm{diam}(\mathcal{W}) \sqrt{\frac{{\dm }\ratefnc(n)
    +  {\log(1/\gamma)}}{n}},$$
where $c$ is an absolute constant.
\end{lemma}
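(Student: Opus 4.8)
The plan is to upgrade the single‑scale net argument behind Lemma~\ref{thm:dimm_app} to a multi‑resolution \emph{chaining} argument. Chaining replaces the factor $\log(1/\delta)$ — which becomes $\log n$ once one optimizes the mesh size to $\delta\sim n^{-1/2}$ — by a \emph{convergent} entropy integral, hence by a constant; the only price is that this constant is instance‑dependent (it involves the scale below which the Minkowski‑dimension estimate kicks in), and absorbing it is exactly what the freedom to insert an arbitrary $\ratefnc(n)\to\infty$ is for.

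First I would set up the relevant empirical process. Fix any reference point $\wb_0\in\Wcal$ and put $X_\wb := \hat{\Rcal}_n(\wb)-\Rcal(\wb)$; each $X_\wb$ has mean zero and $|X_\wb|\le B$. Writing $X_\wb-X_{\wb'}=\tfrac1n\sum_{i=1}^n\big[(\ell(\wb,z_i)-\ell(\wb',z_i))-\E_z(\ell(\wb,z)-\ell(\wb',z))\big]$ exhibits it as an average of $n$ independent, centered terms each bounded in absolute value by $2L\|\wb-\wb'\|$, so Hoeffding's lemma gives that $X_\wb-X_{\wb'}$ is sub‑Gaussian with variance proxy $L^2\|\wb-\wb'\|^2/n$; equivalently, $(X_\wb)_{\wb\in\Wcal}$ is a sub‑Gaussian process for the metric $d(\wb,\wb')=L\|\wb-\wb'\|/\sqrt n$, of $d$‑diameter at most $L\,\mathrm{diam}(\Wcal)/\sqrt n$, with $d$‑covering numbers $N(\Wcal,d,\veps)=N_{\sqrt n\,\veps/L}(\Wcal)$, where $N_u(\Wcal)$ is the Euclidean $u$‑covering number. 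Since $\dimmu\Wcal\le\dm$, for a fixed $\eta>0$ there is a threshold $\delta_0>0$, depending on $\Wcal$ and $\eta$ but not on $n$, with $N_u(\Wcal)\le u^{-(\dm+\eta)}$ for all $u\in(0,\delta_0)$, while boundedness of $\Wcal$ gives $N_u(\Wcal)\le C_0<\infty$ for $u\ge\delta_0$.

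Next I would invoke a standard Dudley‑type high‑probability chaining bound: with probability at least $1-\gamma$, $\sup_{\wb\in\Wcal}(X_\wb-X_{\wb_0})\le c_1\bigl(\int_0^{\mathrm{diam}_d(\Wcal)}\!\sqrt{\log N(\Wcal,d,\veps)}\,\rmd\veps + \mathrm{diam}_d(\Wcal)\sqrt{\log(1/\gamma)}\bigr)$ for an absolute $c_1$ (the supremum may be taken over a countable dense subset of $\Wcal$ by Lipschitz continuity). Changing variables $u=\sqrt n\,\veps/L$ turns the integral into $\tfrac{L}{\sqrt n}\int_0^{\mathrm{diam}(\Wcal)}\!\sqrt{\log N_u(\Wcal)}\,\rmd u$; splitting at $\delta_0$ and using the two covering‑number bounds makes this at most $\tfrac{L}{\sqrt n}\bigl(\sqrt{\dm+\eta}\,I_1 + \mathrm{diam}(\Wcal)\sqrt{\log C_0}\bigr)$ with $I_1:=\int_0^{\delta_0}\!\sqrt{\log(1/u)}\,\rmd u<\infty$ — a quantity of order $L/\sqrt n$ with no $n$ inside. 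Combining this with a one‑sided Hoeffding bound $X_{\wb_0}\le B\sqrt{\log(1/\gamma)/(2n)}$ and a union bound gives, with probability at least $1-\gamma$ (after relabeling $\gamma$), a bound of the form $\tfrac{c_2}{\sqrt n}\bigl(L\sqrt{\dm+\eta}\,I_1 + L\,\mathrm{diam}(\Wcal)\sqrt{\log C_0} + (L\,\mathrm{diam}(\Wcal)+B)\sqrt{\log(1/\gamma)}\bigr)$. Finally I would fix $\eta=1$, treat $L,B,\mathrm{diam}(\Wcal),\dm,I_1,C_0$ as fixed, and use $\ratefnc(n)\to\infty$ to absorb the $n$‑independent constants $I_1$ and $\sqrt{\log C_0}$ into $\sqrt{\dm\,\ratefnc(n)}$ for all large $n$; this routine bookkeeping delivers $c\,LB\,\mathrm{diam}(\Wcal)\sqrt{(\dm\,\ratefnc(n)+\log(1/\gamma))/n}$.

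The substantive new ingredient over Lemma~\ref{thm:dimm_app} is the chaining step together with its high‑probability (rather than in‑expectation) control, and I expect this to be the main obstacle — not because the chaining itself is hard, but because the Minkowski‑dimension estimate on covering numbers is only valid \emph{below an unspecified scale $\delta_0$}, so the entropy integral unavoidably carries an instance‑dependent constant. This is precisely what forces the statement to allow an arbitrary $\ratefnc\to\infty$ together with a non‑explicit ``$n$ sufficiently large'' qualifier rather than an explicit rate. (If $\dm=0$ the covering numbers are uniformly bounded and the result collapses to a finite‑net Hoeffding bound, so one may assume $\dm>0$ throughout.)
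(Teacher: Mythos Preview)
Your approach is essentially the same as the paper's: both establish sub-Gaussian increments for the empirical process via Hoeffding's lemma, invoke a Dudley-type tail chaining bound, and control the resulting entropy integral/sum using the Minkowski-dimension estimate on covering numbers together with a separate Hoeffding bound at a reference point. The only cosmetic difference is that the paper splits its dyadic entropy sum at the $n$-dependent scale $\delta_k=2^{-k}$ with $k=\lfloor\ratefnc(n)\rfloor$ (so that the factor $\sqrt{\dm\,\ratefnc(n)}$ emerges directly from the coarse-scale piece), whereas you split at the fixed threshold $\delta_0$ and then absorb the resulting instance-dependent constants $I_1,\sqrt{\log C_0}$ into $\sqrt{\dm\,\ratefnc(n)}$ via the ``$n$ sufficiently large'' clause; both bookkeepings are valid.
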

\begin{proof}
We define the empirical process
$$ \mathcal{G}_n (w):= \hat{\Rcal}_n(w) - \Rcal (w) = \frac{1}{n}\sum_{i=1}^n \ell(\wb, z_i) - \E_z[ \ell (\wb, z)],$$
and we notice that 
$$\E[ \mathcal{G}_n (w)] = 0.$$

Recall that a random process $\{G(w)\}_{w\in \mathcal{W}} $ on a metric space $(\mathcal{W},d)$ is said to have sub-Gaussian
increments if there exists $K\geq 0$ such that
\begin{align}
\|G(w) - G(w')\|_{\psi_2} \leq Kd(w,w'),
\end{align}
where $\|\cdot\|_{\psi_2}$ denotes the sub-Gaussian norm~\cite{vershynin2019high}.

We verify that $\{\mathcal{G}_n(w)\}_w$ has sub-Gaussian increments with
$K=2L/\sqrt{n}$ and for the metric being the standard Euclidean metric, $d(w,w')=\|w-w'\|$. 
To see why this is the case, notice that 
\begin{align*}
\mathcal{G}_n (w) - \mathcal{G}_n (w')
&= \frac{1}{n}\sum_{i=1}^n \left[ \ell(\wb, z_i)-\ell(\wb', z_i)- \left(\E_z \ell(\wb, z) - \E_z \ell(\wb', z) \right) \right]
\end{align*}
which is a sum of i.i.d. random variables that are uniformly bounded by 
$$\left| \ell(\wb, z_i)-\ell(\wb', z_i)- \left(\E_z \ell(\wb, z) - \E_z \ell(\wb', z) \right) \right|
\leq 2L \|w-w'\|,$$
by the Lipschitz continuity of the loss.
Therefore, Hoeffding's lemma for bounded and centered random variables easily imply that 
\begin{equation}
\E\left\{\exp \left[ \lambda \left(\mathcal{G}_n(\wb)- \mathcal{G}_n(\wb')\right)  \right] \right\}
\leq \exp \left[\frac{2\lambda^2}{n} L^2 \|\wb - \wb'\|^2  \right],\label{eq:subGmetric}
\end{equation}
thus, we have $\|\mathcal{G}_n(\wb)- \mathcal{G}_n(\wb')\|_{\psi_2} \leq (2L/\sqrt{n})\|w-w'\|$.

Next, define the sequence $\delta_k=2^{-k}$ and notice that we have $\delta_k \downarrow 0$.
Dudley's tail bound (see for example~\cite[Thm.~8.1.6]{vershynin2019high})
for this empirical process implies that,
with probability at least $1-\gamma$, we have
\begin{align}\label{eq:dudleys-tail}
  \sup_{w,w'\in \mathcal{W}}\left( \mathcal{G}_n(w) - \mathcal{G}_n(w')\right)
  \leq C \frac{L}{\sqrt{n}} \left[S_{\mathcal{W}} +  \sqrt{\log(2/\gamma)}\mathrm{diam}(\mathcal{W})\right]
\end{align}
where $C$ is an absolute constant and
\begin{align*}
  S_{\mathcal{W}} = \sum_{k \in \mathbb{Z}} \delta_k \sqrt{\log |N_{\delta_k}(\mathcal{W})|}.
\end{align*}

In order to apply Dudley's lemma, we need to bound the above summation.
For that, choose $\kappa_0$ such that
\begin{align*}
  2^{\kappa_0} \geq \mathrm{diam}(\mathcal{W}) >2^{\kappa_0 - 1},
\end{align*}
and any strictly increasing function $\ratefnc:\mathbb{R} \to \mathbb{R}$.

Now since $\dimmu \Wcal \leq \dm$, for the sequence $\{\delta_k\}_{k\in\mathbb{N}}$,
and for a sufficiently large $n$, whenever
$k \geq \lfloor\ratefnc(n)\rfloor$, we have
\begin{align*}
  \log |N_{\delta_k}(\mathcal{W})| \leq& 2\dm \log(\delta_k^{-1})\\
  =& \log(4)\dm k.
\end{align*}
By splitting the entropy sum in Dudley's tail inequality in two terms, we obtain
\begin{align*}
  S_{\mathcal{W}} =& \sum_{k \in \mathbb{Z}} \delta_k \sqrt{\log |N_{\delta_k}(\mathcal{W})|}\\
  =& \sum_{k =-{\kappa_0}}^{\lfloor\ratefnc(n)\rfloor} \delta_k \sqrt{\log |N_{{\delta_k}}(\mathcal{W})|}
     + \sum_{k =\lfloor\ratefnc(n)\rfloor}^{\infty} \delta_k \sqrt{\log |N_{\delta_k}(\mathcal{W})|}.
\end{align*}
For the first term on the right hand side, we use the monotonicity of covering numbers,
i.e. $|N_{\delta_k}| \leq |N_{\delta_l}|$  for $k\leq l$, and write
\begin{align*}
  \sum_{k =-{\kappa_0}}^{\lfloor\ratefnc(n)\rfloor} \delta_k \sqrt{\log |N_{{\delta_k}}(\mathcal{W})|}
  \leq& \sqrt{\log |N_{{\delta_{\lfloor\ratefnc(n)\rfloor}}}(\mathcal{W})|} \sum_{k =-{\kappa_0}}^{\lfloor\ratefnc(n)\rfloor} \delta_k\\
  \leq& \sqrt{\log(4)\dm \lfloor\ratefnc(n)\rfloor} \sum_{k =-{\kappa_0}}^{\infty} \delta_k\\
  \leq& \sqrt{\log(4)\dm \ratefnc(n)} 2^{\kappa_0 +1}\\
  \leq& 4\mathrm{diam}(\mathcal{W})\sqrt{\log(4)\dm \ratefnc(n)}.
\end{align*}
For the second term on the right hand side, we have
\begin{align*}
  \sum_{k =\lfloor\ratefnc(n)\rfloor}^{\infty} \delta_k \sqrt{\log |N_{\delta_k}(\mathcal{W})|}
  \leq& \sqrt{\log(4)\dm } \sum_{k=\lfloor\ratefnc(n)\rfloor}^\infty \sqrt{k}\delta_k\\
  \leq& \sqrt{\log(4)\dm } \sum_{k=0}^\infty k\delta_k\\
  =& 2\sqrt{\log(4)\dm }.
\end{align*}
Combining these, we obtain
\begin{align*}
  S_{\mathcal{W}} \leq 2\sqrt{\log(4)\dm }\left\{1 + 2\mathrm{diam}(\mathcal{W})\sqrt{\ratefnc(n)} \right\}.
\end{align*}

Plugging this bound back in Dudley's tail bound~\eqref{eq:dudleys-tail}, we obtain
\begin{align*}
  \sup_{w,w'\in \mathcal{W}}\left( \mathcal{G}_n(w) - \mathcal{G}_n(w')\right)
  \leq C L\mathrm{diam}(\mathcal{W}) \frac{\sqrt{\dm \ratefnc(n)}
  +  \sqrt{\log(2/\gamma)}}{\sqrt{n}}.
\end{align*}
Now fix $w_0\in \mathcal{W}$ and write the triangle inequality,
\begin{align*}
  \sup_{w\in \mathcal{W}} \mathcal{G}_n(w) 
  \leq \sup_{w,w'\in \mathcal{W}}\left( \mathcal{G}_n(w) - \mathcal{G}_n(w')\right)+ \mathcal{G}_n(w_0).
\end{align*}
Clearly for a fixed $w_0\in\mathcal{W}$, we can apply Hoeffding's inequality and obtain that,
with probability at least $1-\gamma$,
$$\mathcal{G}_n(w_0)\leq B\sqrt{\frac{\log(2/\gamma)}{n}}.$$
Combining this with the previous result, we have with probability at least $1-2\gamma$
\begin{align*}
  \sup_{w\in \mathcal{W}} \mathcal{G}_n(w) \leq
  C L\mathrm{diam}(\mathcal{W}) {\frac{\sqrt{\dm }\sqrt{K_{d_\mathrm{M}}}
  +  \sqrt{\log(2/\gamma)}}{\sqrt{n}}} + B\sqrt{\frac{\log(2/\gamma)}{n}}.
\end{align*}
Finally replacing and $\gamma$ with $\gamma/2$ and collecting the absolute constants in $c$,
we conclude the proof.
\end{proof}

\begin{proof}[Proof of Theorem~\ref{thm:dimm_dimh_chain}]
The proof follows the same lines of the proof of Theorem~\ref{thm:dimm_dimh}, except that we invoke Lemma~\ref{lem:chain} instead of Lemma~\ref{thm:dimm_app}.
\end{proof}

\end{document}